\documentclass{article}

\newif\ifarxiv
\arxivtrue 

\newif\ifneurips
\ifarxiv
	\neuripsfalse
\else
	\neuripstrue
\fi	




\usepackage{our_style}
\usepackage[capitalize]{cleveref}
\usepackage{natbib}


\ifarxiv
	\usepackage[nonatbib,preprint]{neurips_2020}
\else
	\usepackage[nonatbib]{neurips_2020}
\fi

\usepackage[utf8]{inputenc} 
\usepackage[T1]{fontenc}    
\usepackage{url}            
\usepackage{booktabs}       
\usepackage{amsfonts}       
\usepackage{nicefrac}       
\usepackage{microtype}      
\usepackage{algorithm}
\usepackage{algpseudocode}
\usepackage{algorithmicx}
\usepackage{wrapfig}

\renewcommand{\IJ}[1][]{\mathrm{IJ}_{\backslash n #1}}

\title{Approximate Cross-Validation with Low-Rank Data in High Dimensions}

%

\author{%
  William T. Stephenson \\
  MIT\\
  \texttt{wtstephe@mit.edu} \\
  \And
  Madeleine Udell \\
  Cornell University \\
  \texttt{udell@cornell.edu}
  \And
  Tamara Broderick \\
  MIT\\
  \texttt{tamarab@mit.edu}
}

\begin{document}

\maketitle

\begin{abstract}
Many recent advances in machine learning are driven by a challenging trifecta:
large data size $N$; high dimensions; and expensive algorithms. In this setting, cross-validation (CV) serves as an important tool for model assessment. Recent advances in approximate cross validation (ACV) provide accurate approximations to CV with only a single model fit, avoiding traditional CV's requirement for repeated runs of expensive algorithms. Unfortunately, these ACV methods can lose both speed and accuracy in high dimensions --- unless sparsity structure is present in the data. Fortunately, there is an alternative type of simplifying structure that is present in most data: approximate low rank (ALR). 
Guided by this observation, we develop a new algorithm for ACV that is fast and accurate in the presence of ALR data. 
Our first key insight is that the Hessian matrix --- whose inverse forms the computational bottleneck of existing ACV methods --- is ALR. We show that, despite our use of the \emph{inverse} Hessian, a low-rank approximation using the largest (rather than the smallest) matrix eigenvalues enables fast, reliable ACV. 
Our second key insight is that, in the presence of ALR data, error in existing ACV methods roughly grows with the (approximate, low) rank rather than with the (full, high) dimension. These insights allow us to prove theoretical guarantees on the quality of our proposed algorithm --- along with fast-to-compute upper bounds on its error. We demonstrate the speed and accuracy of our method, as well as the usefulness of our bounds, on a range of real and simulated data sets.
\end{abstract}

\section{Introduction} \label{sec:introduction}

Recent machine learning advances are driven at least in part by increasingly rich data sets --- large in both data size $N$ and dimension $D$. 
The proliferation of data and algorithms makes cross-validation (CV)
\citep{stone:1974:earlyCV, geisser:1975:earlyCV,musgrave:2020:realityCheck}
an appealing tool for model assessment due its ease of use and wide applicability. 
For high-dimensional data sets, leave-one-out CV (LOOCV) is often especially accurate
as its folds more closely match the true size of the data \citep{burman:1989:cvExperiments};
see also Figure 1 of \citet{rad:2018:detailedALOO}. 
Traditionally many practitioners nonetheless avoid LOOCV due its computational expense; it requires
re-running an expensive machine learning algorithm $N$ times.
To address this expense, a number of authors have proposed approximate cross-validation (ACV) methods
\citep{beirami:2017:firstALOO, rad:2018:detailedALOO, giordano:2018:ourALOO}; these methods
are fast to run on large data sets, and both theory and experiments demonstrate their accuracy.
But these methods struggle in high-dimensional problems in two ways.
First, they require inversion of a $D \times D$ matrix, a computationally expensive undertaking.
Second, their accuracy degrades in high dimensions;
see Fig.~1 of \citet{stephenson:2020:sparseALOO} for a classification example
and \cref{highDAccuracy} below for a count-valued regression example.
\citet{koh:2017:influenceFunctions,lorraine:2020:hyperparamOpt} have investigated approximations to the 
matrix inverse for problems similar to ACV, but these approximations do not work well for ACV itself; see \citep[Appendix B]{stephenson:2020:sparseALOO}.
\citet{stephenson:2020:sparseALOO} demonstrate how a practitioner might avoid these high-dimensional problems in the presence of sparse data.
But sparsity may be a somewhat limiting assumption.

\begin{wrapfigure}[23]{r}{0.55\textwidth}
\vspace{-0.7cm}
\centering
\includegraphics[scale=0.5]{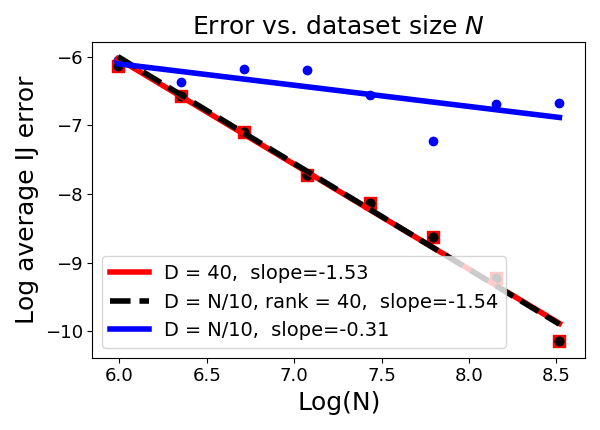}
\caption{Accuracy of the IJ approximation in \cref{IJdef} for a synthetic Poisson regression problem versus the dataset size $N$. Red shows the accuracy when the data dimension is fixed at $D = 40$, blue when the dimension grows as $D = N / 10$, and black when the dimension grows as $D = N / 10$ but with a fixed rank of 40. High-dimensional yet low-rank data has identical performance to low-dimensional data.}
\label{highDAccuracy}
\end{wrapfigure}

We here consider approximately \emph{low-rank} (ALR) data. \citet{udell:2019:bigDataLowRank} argue that
ALR data matrices are pervasive in applications ranging from fluid dynamics and genomics to
social networks and medical records --- and that there are theoretical reasons to expect ALR structure
in many large data matrices. 
For concreteness and to facilitate theory, we focus on fitting generalized linear models (GLMs). We note that GLMs
are a workhorse of practical data analysis; as just one example, the lme4 package \citep{bates:2015:lme4} has been
cited 24,923 times since 2015 as of this writing. 
While accurate ACV methods for GLMs alone thus have potential
for great impact, we expect many of our insights may extend beyond both GLMs and LOOCV (i.e.\ to other CV and bootstrap-like ``retraining'' schemes).

In particular, we propose an algorithm for fast, accurate ACV for GLMs with high-dimensional covariate matrices
--- and provide computable upper bounds on the error of our method relative to exact LOOCV.
Two major innovations power our algorithm.
First, we prove that existing ACV methods automatically obtain high accuracy
in the presence of high-dimensional yet ALR data.
Our theory provides cheaply computable upper bounds on the error of existing ACV methods.
Second, we notice that the $D \times D$ matrix that needs to be inverted in ACV
is ALR when the covariates are ALR.
We propose to use a low-rank approximation to this matrix.
We provide a computable upper bound on the extra error introduced by using such a low-rank approximation.
By studying our bound, we show the surprising fact that, for the purposes of ACV,
the matrix is well approximated by using its \emph{largest} eigenvalues,
despite the fact that ACV uses the matrix inverse.
We demonstrate the speed and accuracy of both our method and bounds with a range of experiments.

\section{Background: approximate CV methods}

We consider fitting a generalized linear model (GLM) with parameter $\theta \in \R^D$ to some dataset
with $N$ observations, $\set{x_n, y_n}_{n=1}^N$,
where $x_n \in \R^D$ are covariates and $y_n \in \R$ are responses.
We suppose that the $x_n$ are approximately low rank (ALR); that is,
the matrix $X \in \R^{N \times D}$ with rows $x_n$ has many singular values near zero.
These small singular values can amplify noise in the responses.
Hence it is common to use $\ell_2$ regularization to ensure that our estimated parameter $\hat\theta$
is not too sensitive to the subspace with small singular values;
the rotational invariance of the $\ell_2$ regularizer automatically penalizes
any deviation of $\theta$ away from the low-rank subspace \citep[Sec.\ 3.4]{hastie:2009:els}.
Thus we consider:
\begin{equation}
	\hat\theta := \argmin_{\theta\in\R^D} \frac{1}{N} \sum_{n=1}^N f(x_n^T\theta, y_n) + \frac{\lambda}{2} \n{\theta}_2^2,
\end{equation}
where $\lambda \geq 0$ is some regularization parameter and $f: \R \times \R \to \R$ is convex in its first argument for each $y_n$.
Throughout, we assume $f$ to be twice differentiable in its first argument.
To use leave-one-out CV (LOOCV), we compute $\thetan$,
the estimate of $\theta$ after deleting the $n$th datapoint from the sum, for each $n$.
To assess the out-of-sample error of our fitted $\hat\theta$, we then compute:
\begin{equation}
	\frac{1}{N} \sum_{n=1}^N \mathrm{Err}(x_n^T\thetan, y_n),
\label{exactLOO}
\end{equation}
where $\mathrm{Err}: \R \times \R \to \R$ is some function measuring the discrepancy
between the observed $y_n$ and its prediction based on $\thetan$ --- 
for example, squared error or logistic loss.

Computing $x_n^T \thetan$ for every $n$ requires solving $N$ optimization problems,
which can be a prohibitive computational expense.
Approximate CV (ACV) methods aim to alleviate this burden via one of two principal approaches below.
Denote the Hessian of the objective by $H := (1/N) \sum_{n=1}^N \nabla^2_\theta f(x_n^T\hat\theta, y_n) + \lambda I_D$
and the $k$th scalar derivative of $f$ as
$\hat D^{(k)}_n := d^{k} f(z, y_n) / dz^k |_{z = x_n^T \hat\theta}$.
Finally, let $Q_n := x_n^T H\inv x_n$ be the $n$th quadratic form on $H\inv$.
The first approximation, based on taking a Newton step from $\hat\theta$
on the objective $(1/N) \sum_{m\neq n}^N f(x_m^T\theta, y_m) + \lambda\|\theta\|_2^2$,
 was proposed by \citet{obuchi:2016:linearALOO,obuchi:2018:logisticALOO,rad:2018:detailedALOO,beirami:2017:firstALOO}.
 We denote this approximation by $\NS$; specializing to GLMs, we have:
\begin{equation}
	x_n^T\thetan \approx x_n^T \NS := x_n^T \hat\theta + \frac{\dnone}{N} \frac{Q_n}{1 - \dntwo Q_n}. \label{NSdef}
\end{equation}
Throughout we focus on discrepancy between $x_n^T \NS$ and $x_n^T\thetan$, rather than between $\NS$ and $\thetan$, since $x_n^T\thetan$ is the argument of \cref{exactLOO}.
See \cref{app:NSIJDerivations} for a derivation of \cref{NSdef}.
The second approximation we consider is based on the infinitesimal jackknife \citep{jaeckel:1972:infinitesimal,efron:1982:jackknife};
it was conjectured as a possible ACV method by \citet{koh:2017:influenceFunctions}, used in a comparison by \citet{beirami:2017:firstALOO},
and studied in depth by \citet{giordano:2018:ourALOO}.
We denote this approximation by $\IJ$; specializing to GLMs, we have:
\begin{equation}
	x_n^T\thetan \approx x_n^T \IJ := x_n^T \hat\theta + (\dnone / N) Q_n . \label{IJdef}
\end{equation}
See \cref{app:NSIJDerivations} for a derivation. We consider both $\NS$ and $\IJ$ in what follows as the two have complementary strengths.
In our experiments in \cref{sec:experiments}, $\NS$ tends to be more accurate;
we suspect that GLM users should generally use $\NS$.
On the other hand, $\NS$ requires the inversion of a different $D \times D$ matrix for each $n$.
In the case of LOOCV for GLMs, each matrix differs by a rank-one update,
so standard matrix inverse update formulas allow us to derive \cref{NSdef},
which requires only a single inverse across folds.
But such a simplification need not generally hold for models beyond GLMs and data re-weightings beyond LOOCV (such as other forms of CV or the bootstrap).
By contrast, even beyond GLMs and LOOCV,
the IJ requires only a single matrix inverse for all $n$.

In any case, we notice that existing theory and experiments for both $\NS$ and $\IJ$
tend to either focus on low dimensions or show poor performance in high dimensions;
see \cref{app:previousTheory} for a review.
One problem is that error in both approximations can grow large in high dimensions.
See
\citep{stephenson:2020:sparseALOO} for an example; also, in \cref{highDAccuracy}, 
we show the $\IJ$ on a synthetic Poisson regression task.
When we fix $D = 40$ and $N$ grows, the error drops quickly;
however, if we fix $D / N = 1/10$ the error is substantially worse.
A second problem is that both $\NS$ and $\IJ$ rely on the computation of $Q_n = x_n^T H\inv x_n$,
which in turn relies on computation\footnote{In practice, for numerical stability, we compute a factorization of $H$ so that $H^{-1} x_n$ can be quickly evaluated for all $n$. However, for brevity, we refer to computation of the inverse of $H$ throughout.} of $H\inv$.
The resulting $O(D^3)$ computation time quickly becomes impractical in high dimensions.
Our major contribution is to show that both of these issues can be avoided when the data are ALR.

\section{Methodology}
\label{sec:methodology}

\begin{algorithm}
    \caption{Approximation to $\{ x_n^T \thetan\}_{n=1}^N$ for low-rank GLMs}
    \label{alg:mainAlgorithm}
    \begin{algorithmic}[1] 
        \Procedure{AppxLOOCV}{$\hat\theta, X, \lambda, \{\dnone\}_{n=1}^N, \{\dntwo\}_{n=1}^N, K$}
        	\State $B \gets X^T \mathrm{diag}\{\dntwo\}_{n=1}^N X$ \Comment{The Hessian, $H$ equals $B + \lambda I_D$}
        	\State $\{ \tildeqn \}_{n=1}^N \gets \Call{AppxQn}{B, K, \lambda}$ \Comment{Uses rank-$K$ decomposition of $B$ (\cref{sec:computation})}
            \For{$n = 1, \dots, N$}
                \State \textbf{either} $x_n^T \tildeNS \gets x_n^T \NS(\tildeqn)$ \Comment{i.e., compute \cref{NSdef} using $\tildeqn$ instead of $Q_n$}
                \State \textbf{or} $x_n^T \tildeIJ \gets x_n^T \IJ(\tildeqn)$ \Comment{i.e., compute \cref{IJdef} using $\tildeqn$ instead of $Q_n$}
            \EndFor\label{euclidendwhile}
            \State \textbf{return} $\{x_n^T \tildeNS\}_{n=1}^N$ \textbf{or} $\{x_n^T \tildeIJ\}_{n=1}^N$ \Comment{User's choice}
        \EndProcedure
    \end{algorithmic}
\end{algorithm}

We now present our algorithm for fast, approximate LOOCV in GLMs with ALR data. We then state our main theorem, which (1) bounds the error in our algorithm
relative to exact CV, (2) gives the computation time of our algorithm, and (3) gives the computation time of our bounds. Finally we discuss the implications of our theorem before moving on to the details of its proof in the remainder of the paper.

Our method appears in \cref{alg:mainAlgorithm}. To avoid the $O(D^3)$ matrix inversion cost, we replace $H$ by $\tildeH \approx H$,
where $\tildeH$ uses a rank-$K$ approximation and can be quickly inverted.
We can then compute $\tildeqn := x_n^T \tildeH\inv x_n \approx Q_n$, which enters into either the
NS or IJ approximation, as desired.

Before stating \cref{thm:mainThm}, we establish some notation.
We will see in \cref{prop:basicErrorBound} of \cref{sec:accuracy}
that we can provide computable upper bounds $\qnbnd \geq \abs{\tildeqn - Q_n}$; $\eta_n$ will enter directly into the error bound for $x_n^T \tildeIJ$ in \cref{thm:mainThm} below.
To bound the error of $x_n^T \tildeNS$, we need to further define
\begin{equation*}
	E_n := \max \set{\left| \frac{\tildeqn + \qnbnd}{1 - \dntwo (\tildeqn + \qnbnd)} - \frac{\tildeqn}{1 - \dntwo \tildeqn} \right|, \; \left| \frac{\tildeqn - \qnbnd}{1 - \dntwo (\tildeqn - \qnbnd)} - \frac{\tildeqn}{1 - \dntwo \tildeqn} \right| }.
\end{equation*}
Additionally, we will see in \cref{prop:MnBound} of \cref{sec:accuracy} that we can bound the ``local Lipschitz-ness'' of the Hessian related to the third derivatives of $f$ evaluated at some $z$, $\dnthree(z) := d^3 f(z,y_n) / dz^3 |_{z = z}$. We will denote our bound by $M_n$:
\begin{equation}
 	M_n \geq \left(\frac{1}{N} \sum_{m\neq n} \n{x_m}_2^2 \right)  \max_{s \in [0,1]} \left| \dnthree\left(x_n^T ((1-s)\hat\theta + s\thetan) \right) \right|,
 	\label{MnDef}
\end{equation}
We are now ready to state, and then discuss, our main result --- which is proved in \cref{app:mainTheoremProof}.
\begin{thm} \label{thm:mainThm}
	\textbf{(1) Accuracy:} Let $\qnbnd \geq \abs{Q_n - \tilde Q_n}$ be the upper bound produced by \cref{prop:basicErrorBound} and $M_n$ the local Lipschitz constants computed in \cref{prop:MnBound}. Then the estimates $x_n^T \tildeNS$ and $x_n^T \tildeIJ$ produced by \cref{alg:mainAlgorithm} satisfy:
	\begin{align}
		& \abs{x_n^T \tildeNS - x_n^T \thetan} \leq \frac{M_n}{N^2 \lambda^3} \abs{\dnone}^2 \n{x_n}_2^3 + \abs{\dnone} E_n
			\label{algNSBnd} \\
		& \abs{x_n^T \tildeIJ - x_n^T \thetan} \leq \frac{M_n}{N^2 \lambda^3} \abs{\dnone}^2 \n{x_n}_2^3 + \frac{1}{N^2 \lambda^2} \abs{\dnone} \dntwo \n{x_n}_2^4 + \abs{\dnone} \qnbnd.
			\label{algIJBnd}
	\end{align}
	\textbf{(2) Algorithm computation time:} The runtime of \cref{alg:mainAlgorithm} is in $O(NDK + K^3)$. 
	\textbf{(3) Bound computation time:} The upper bounds in \cref{algNSBnd,algIJBnd} are computable in $O(DK)$ time for each $n$ for common GLMs such as logistic and Poisson regression.
\end{thm}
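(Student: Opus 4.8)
The plan is to prove the three claims separately: the accuracy bounds \cref{algNSBnd,algIJBnd} by a triangle inequality that separates the error of replacing $Q_n$ by $\tildeqn$ from the intrinsic error of the $\NS$/$\IJ$ approximations, and the two timing claims by a line-by-line accounting of \cref{alg:mainAlgorithm} together with the closed forms for $\qnbnd$, $E_n$, and $M_n$.

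For \textbf{Part (1)}, for the $\NS$ estimate I would write $\abs{x_n^T \tildeNS - x_n^T \thetan} \le \abs{x_n^T \tildeNS - x_n^T \NS} + \abs{x_n^T \NS - x_n^T \thetan}$, and for the $\IJ$ estimate insert $x_n^T \NS$ as a further intermediate point, $\abs{x_n^T \tildeIJ - x_n^T \thetan} \le \abs{x_n^T \tildeIJ - x_n^T \IJ} + \abs{x_n^T \IJ - x_n^T \NS} + \abs{x_n^T \NS - x_n^T \thetan}$. The substitution terms are handled directly: for $\IJ$ the map $q \mapsto (\dnone/N)q$ is linear, so $\abs{x_n^T \tildeIJ - x_n^T \IJ} = (\abs{\dnone}/N)\abs{\tildeqn - Q_n} \le \abs{\dnone}\qnbnd$ by \cref{prop:basicErrorBound}; for $\NS$ the map $q \mapsto q/(1 - \dntwo q)$ is monotone on any interval avoiding its pole $1/\dntwo$, and \cref{prop:basicErrorBound} guarantees $Q_n \in [\tildeqn - \qnbnd,\, \tildeqn + \qnbnd]$ with this interval on one side of the pole, so the worst case is attained at an endpoint — which is exactly the definition of $E_n$ — giving $\abs{x_n^T \tildeNS - x_n^T \NS} \le \abs{\dnone} E_n$. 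The intrinsic term $\abs{x_n^T \NS - x_n^T \thetan}$ I bound by the standard single-Newton-step error estimate: $\NS$ is precisely $\hat\theta - H_{\backslash n}\inv \nabla F_{\backslash n}(\hat\theta)$ on the leave-$n$-out objective $F_{\backslash n}$, whose gradient at $\hat\theta$ is $-(\dnone/N)x_n$ (since $\nabla F(\hat\theta)=0$), so $\lambda$-strong convexity gives $\n{\thetan - \hat\theta}_2 \le \abs{\dnone}\n{x_n}_2/(N\lambda)$, and the Newton-step error is at most the Hessian-Lipschitz constant over $2\lambda$ times this squared; \cref{prop:MnBound} certifies that this local curvature is at most $M_n$, and contracting against $x_n$ (a further $\n{x_n}_2$) yields a term dominated by $\frac{M_n}{N^2\lambda^3}\abs{\dnone}^2\n{x_n}_2^3$. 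Finally, for $\IJ$ the extra term $\abs{x_n^T \IJ - x_n^T \NS}$ is the rank-one Sherman--Morrison correction distinguishing the full Hessian $H$ (used by $\IJ$) from $H_{\backslash n}$ (used by $\NS$); bounding it via $Q_n = x_n^T H\inv x_n \le \n{x_n}_2^2/\lambda$ (using $H \succeq \lambda I_D$, which holds because $f$ is convex so $\dntwo \ge 0$) and $\abs{x_n^T(\thetan - \hat\theta)} \le \n{x_n}_2 \cdot \abs{\dnone}\n{x_n}_2/(N\lambda)$ gives the $\frac{1}{N^2\lambda^2}\abs{\dnone}\dntwo\n{x_n}_2^4$ term.

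For \textbf{Part (2)} I would step through \cref{alg:mainAlgorithm}: $B$ is never materialized; \textsc{AppxQn} (per \cref{sec:computation}) builds a rank-$K$ factorization of $B = A^T A$ with $A := \mathrm{diag}\{\sqrt{\dntwo}\}X \in \R^{N\times D}$ via a randomized range-finder / truncated SVD in $O(NDK)$ time, applies the Woodbury identity to $\tildeH = (\text{rank }K) + \lambda I_D$ at cost $O(K^3)$ for the $K\times K$ inner solve, and evaluates $\tildeqn = x_n^T \tildeH\inv x_n$ for all $n$ in $O(NDK)$ (each is a handful of products with the $D\times K$ factor); the closing loop over $n$ is $O(1)$ per datapoint, so the total is $O(NDK + K^3)$. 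For \textbf{Part (3)} I would verify that, after an $O(ND)$ one-time precomputation (of $\sum_m \n{x_m}_2^2$ and of the products $U^T x_n$ already cached in Part (2)), each factor on the right of \cref{algNSBnd,algIJBnd} costs $O(DK)$ per $n$: $\n{x_n}_2$ and $\dnone,\dntwo$ are $O(D)$; $\qnbnd$ has the closed form of \cref{prop:basicErrorBound} whose $n$-dependent part needs only $U^T x_n$ and $\n{x_n}_2$, hence $O(DK)$; $E_n$ is then $O(1)$; and $M_n$ from \cref{MnDef} needs $\frac{1}{N}\sum_{m\neq n}\n{x_m}_2^2 = \frac{1}{N}(\sum_m \n{x_m}_2^2 - \n{x_n}_2^2)$, obtained in $O(1)$, times $\max_{s\in[0,1]}\abs{\dnthree(\cdot)}$ — the only GLM-specific step, which for logistic and Poisson losses is a closed-form evaluation of a simple univariate function over the interval for $x_n^T\thetan$ supplied by \cref{prop:MnBound}, costing $O(D)$ including the inner product $x_n^T\hat\theta$.

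The step I expect to be the main obstacle is the intrinsic-error bound in Part (1): reducing the quadratic Taylor remainder of the leave-$n$-out stationarity condition to a bound governed by the particular constant $M_n$ of \cref{MnDef} — which couples only the aggregate $\frac{1}{N}\sum_{m\neq n}\n{x_m}_2^2$ with the $n$-th loss term's third derivative $\dnthree$ along the leave-$n$-out segment, rather than the full tensor norm of $\nabla^3 F_{\backslash n}$ — is exactly the content of \cref{prop:MnBound}, which I would take as given, concentrating here on assembling the pieces with the correct powers of $N$, $\lambda$, and $\n{x_n}_2$. A secondary subtlety is ensuring that the perturbed interval $[\tildeqn - \qnbnd,\, \tildeqn + \qnbnd]$ does not straddle the pole $1/\dntwo$ of the $\NS$ map, so that bounding by the endpoint value $E_n$ is legitimate; this should follow from the magnitude control already provided by \cref{prop:basicErrorBound}.
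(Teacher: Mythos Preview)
Your proposal is correct and follows essentially the same route as the paper's proof: a triangle-inequality split into the substitution error (controlled by $\qnbnd$ for $\IJ$ and by $E_n$ via monotonicity for $\NS$) and the intrinsic ACV error, plus a line-by-line cost accounting for the two timing claims. The only cosmetic difference is that the paper packages the intrinsic-error bounds (your Newton-step/Sherman--Morrison analysis) into \cref{prop:appxLowRankAccuracy} and simply invokes it, whereas you re-derive those bounds inline; your three-term decomposition for $\IJ$ (through $\NS$) is exactly how the paper proves \cref{appxBoundIJ} inside that lemma.
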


To interpret the running times, note that standard ACV methods have total runtime in $O(ND^2 + D^3)$. So \cref{alg:mainAlgorithm} represents a substantial speedup when the dimension $D$ is large and $K \ll D$. Also, note that our bound computation time has no worse behavior than our algorithm runtime.
We demonstrate in our experiments (\cref{sec:experiments}) that our error bounds are both computable and useful in practice. 
To help interpret the bounds, note that they contain two sources of error: (A) the error of our additional approximation relative to existing ACV methods (i.e.\ the use of $\tildeqn$) and (B) the error of existing ACV methods in the presence of ALR data. 
Our first corollary notes that (A) goes to zero as the data becomes exactly low rank.
\begin{cor}
	\label{cor:lowRank}
	As the data becomes exactly low rank with rank $R$ (i.e., $X$'s lowest singular values $\sigma_d \to 0$ for $d = R+1, \dots, D$), we have $\eta_n, E_n \to 0$ if $K \geq R$.
\end{cor}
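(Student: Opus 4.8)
The plan is to reduce the statement to one structural fact: once $X$ has rank exactly $R \le K$, the matrix $B = X^T \mathrm{diag}\{\dntwo\}_{n=1}^N X$ inherits rank at most $R$ (since $\mathrm{rank}(B) \le \mathrm{rank}(X) = R$), and hence its best rank-$K$ approximation $\tilde B$ used inside \textsc{AppxQn} reproduces $B$ exactly, $\tilde B = B$. Consequently $\tilde H = \tilde B + \lambda I_D = H$, so $\tildeqn = x_n^T \tilde H\inv x_n = x_n^T H\inv x_n = Q_n$, and the \emph{exact} error $\abs{\tildeqn - Q_n}$ is identically zero in the limit. The remaining work is to show the two \emph{bounds} $\eta_n$ and $E_n$ — the quantities that actually appear in \cref{thm:mainThm} — also vanish.

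First I would unpack the closed form of $\eta_n$ from \cref{prop:basicErrorBound}. That bound is assembled from the eigenvalues of $B$ discarded by the rank-$K$ truncation (the leading discarded eigenvalue $\lambda_{K+1}(B)$, or the tail $\sum_{d>K}\lambda_d(B)$), multiplied by benign quantities such as $\n{x_n}_2$ and powers of $1/\lambda$ that remain bounded as the bottom singular values of $X$ shrink. Since $\lambda_d(B) \le (\max_m \dntwo)\,\sigma_d(X)^2$ and $\sigma_d(X)\to 0$ for every $d > R$, and since $K \ge R$ forces every discarded index $d > K$ to satisfy $d > R$, each factor entering $\eta_n$ that is not already zero tends to zero; a short monotonicity/continuity argument on the explicit formula then gives $\eta_n \to 0$.

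Next I would treat $E_n$. Regarding $\tildeqn$ and $\dntwo$ as parameters, $E_n$ is a continuous function of $\eta_n$ on a neighbourhood of $\eta_n = 0$, and both arguments of the outer maximum vanish at $\eta_n = 0$; the only way continuity could fail is if a denominator $1 - \dntwo(\tildeqn \pm \eta_n)$ hit zero. This does not happen: in the limit $\tildeqn = Q_n$ and $1 - \dntwo Q_n$ is nonzero — it is precisely the denominator of the $\NS$ correction in \cref{NSdef}, assumed nonzero wherever that approximation is defined — so for configurations close enough to the limit both denominators are bounded away from $0$. Combining this with $\eta_n \to 0$ from the previous paragraph yields $E_n \to 0$, finishing the corollary.

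I expect the main obstacle to be the first step: extracting enough of the structure of the computable bound $\eta_n$ in \cref{prop:basicErrorBound} to certify that \emph{it} (not merely the true error $\abs{\tildeqn - Q_n}$) goes to zero, and in particular checking that none of the factors it multiplies — e.g.\ any conditioning- or $1/\sigma$-type terms — blow up as the bottom singular values of $X$ vanish. Once that is settled, the $E_n$ claim is a routine continuity argument whose only delicate point, non-vanishing of the denominators, is inherited from the GLM convexity and well-posedness assumptions already in force.
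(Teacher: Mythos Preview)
Your overall architecture matches the paper: first argue $\eta_n \to 0$, then deduce $E_n \to 0$ by continuity. Your treatment of $E_n$ is essentially the paper's (which dispatches it in one sentence, ``$E_n$ is a continuous function of $\eta_n$''); your extra care about the denominators $1 - \dntwo(\tildeqn \pm \eta_n)$ is fine and arguably more complete.

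The gap is in how you handle $\eta_n$. You speculate that the bound in \cref{prop:basicErrorBound} is built from discarded eigenvalues of $B$ --- something like $\lambda_{K+1}(B)$ or $\sum_{d>K}\lambda_d(B)$ times $\|x_n\|$-type factors. It is not: the bound is
\[
\eta_n \;=\; \frac{\|P^\perp_{\agree} x_n\|_2^2}{\lambda},
\]
a \emph{projection} of the individual datapoint $x_n$ onto the complement of the subspace $\agree = H\Omega$, where $\Omega$ is the test matrix from \cref{alg:customizedTropp}. The relevant argument is therefore geometric, not spectral: the paper observes that $\Omega$ is constructed from $X^TX\mathcal{E}$ and so captures a rank-$K$ slice of the row span of $X$; once $K\ge R$ and the trailing singular values $\sigma_{R+1},\dots,\sigma_D$ vanish, every $x_n$ lies in that captured subspace, forcing $\|P^\perp_{\agree} x_n\|\to 0$ and hence $\eta_n \to 0$. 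Your opening paragraph --- that $\tilde B = B$ exactly once $\mathrm{rank}(B)\le K$, so $\tildeqn = Q_n$ --- is a correct observation about the \emph{true} error, but as you yourself note it does not by itself control the \emph{bound} $\eta_n$; for that you need the projection argument above. You correctly flagged this step as the likely obstacle; the fix is simply to use the actual form of $\eta_n$ from \cref{prop:basicErrorBound} rather than the eigenvalue-tail guess.
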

See \cref{app:corLowRank} for a proof. Our second corollary gives an example for which the error in existing (exact) ACV methods (B) vanishes as $N$ grows.
\begin{cor}
	\label{cor:Ninfinity}
	Suppose the third derivatives $\dnthree$ and the $x_n$ are both bounded and the data are exactly low-rank with constant rank $R$. Then with $N \to\infty$, $D$ growing at any rate, and $K$ arbitrary, the right hand sides of \cref{algNSBnd,algIJBnd} reduce to $\abs{\dnone} E_n$ and $\abs{\dnone} \eta_n$, respectively.
\end{cor}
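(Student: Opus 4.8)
\emph{Proof plan.}
Each non-surviving summand on the right-hand sides of \cref{algNSBnd,algIJBnd} is an explicit $1/N^2$ times a fixed power of $1/\lambda$ times a product of $M_n$, $\abs{\dnone}$, $\dntwo$, and $\n{x_n}_2$. Since $\lambda>0$ is fixed, the plan is simply to bound $M_n$, $\abs{\dnone}$, $\dntwo$, and $\n{x_n}_2$ by constants that are uniform in $n$, $N$, and $D$; then every non-surviving summand is $O(1/N^2)$ and vanishes as $N\to\infty$, no matter how fast $D$ grows (and regardless of $K$, which enters none of these quantities).

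The key to the uniform bounds is that the fitted linear predictors stay in a fixed compact set. By hypothesis $\n{x_n}_2\le C_x$ for all $n$. Since $\lambda>0$, comparing the $\ell_2$-regularized GLM objective at its minimizer $\hat\theta$ with its value at $\theta=0$, and using the tangent-line lower bound $f(z,y)\ge f(0,y)+f'(0,y)z$ from convexity of $f$, gives $\n{\hat\theta}_2\le (2/\lambda)\,\n{(1/N)\sum_n f'(0,y_n)x_n}_2\le (2C_x/\lambda)\sup_n\abs{f'(0,y_n)}$, a $D$-free bound that is finite for the GLMs of interest (e.g.\ logistic or Poisson with bounded responses); the same argument controls each $\thetan$. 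Hence $x_n^T\hat\theta$, $x_n^T\thetan$, and all convex combinations $x_n^T((1-s)\hat\theta+s\thetan)$ lie in a fixed compact interval $I$. On $I$ the hypothesis that $\dnthree$ is bounded gives the $M_n$ of \cref{prop:MnBound} — a computable expression in $\n{x_m}_2$ and values of $\dnthree$ near the fit — a uniform bound $M_n\le C_M$; and since $f'''$ bounded makes $f''(\cdot,y_n)$ Lipschitz on $I$, and $f''$ bounded then makes $f'(\cdot,y_n)$ Lipschitz on $I$, we get $\dntwo\le C_2$ and $\abs{\dnone}\le C_1$ uniformly, all constants free of $n$, $N$, and $D$.

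Plugging these into \cref{algNSBnd}, its first summand is at most $(C_M C_1^2 C_x^3/\lambda^3)/N^2$; plugging into \cref{algIJBnd}, its first two summands are at most $(C_M C_1^2 C_x^3/\lambda^3)/N^2+(C_1 C_2 C_x^4/\lambda^2)/N^2$. Both tend to $0$ as $N\to\infty$, uniformly over the growth of $D$ since none of the constants involve $D$. Thus the right-hand side of \cref{algNSBnd} converges to $\abs{\dnone}E_n$ and that of \cref{algIJBnd} to $\abs{\dnone}\eta_n$, which is the claim.

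The step I expect to be the main obstacle is the $D$-free control of the fit — keeping $\n{\hat\theta}_2$ and the $\n{\thetan}_2$ bounded as $D\to\infty$ — which is exactly where bounded covariates, exact low rank, and a fixed $\lambda>0$ are used: one must argue that the $\ell_2$-regularized GLM does not spread mass into the proliferating ambient directions as the dimension grows, and without this the offending $1/N^2$ terms need not vanish at all. Everything else (the Lipschitz estimates on $f'$ and $f''$ over the compact interval $I$, and checking that \cref{prop:MnBound}'s $M_n$ is built only from $\n{x_m}_2$ and values of $\dnthree$, hence inherits the uniform bound) is routine.
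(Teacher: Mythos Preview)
The paper does not actually prove this corollary; it is stated without proof and explicitly labeled ``purely illustrative.'' Your approach---bounding each of $M_n$, $\abs{\dnone}$, $\dntwo$, and $\n{x_n}_2$ by constants independent of $N$ and $D$, so that the explicit $1/N^2$ factors kill the non-surviving summands---is the natural one and is sound. The bound $M_n\le C_3 C_x^2$ is immediate from the formula in \cref{prop:MnBound} once $\dnthree$ and the $\n{x_m}_2$ are bounded; your strong-convexity comparison of the objective at $\hat\theta$ to its value at $\theta=0$ is the standard route to a $D$-free bound on $\n{\hat\theta}_2$; and the Lipschitz chain from bounded $f'''$ down to bounded $f''$ and $f'$ on the resulting compact interval of linear predictors is correct.

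Two small remarks. First, as you yourself flag, your bound on $\n{\hat\theta}_2$ requires $\sup_n\abs{f'(0,y_n)}<\infty$, which is not literally among the corollary's stated hypotheses; it holds for logistic regression and for Poisson with bounded responses, and given the paper's disclaimer that the result is illustrative, making this mild extra assumption explicit is the right call rather than a gap. Second, you list exact low rank among the ingredients in the $D$-free control of the fit, but your bound $\n{\hat\theta}_2\le (2C_x/\lambda)\sup_n\abs{f'(0,y_n)}$ uses only $\lambda>0$ and bounded covariates---the rank hypothesis is never actually invoked in your argument. It appears in the corollary for narrative consistency with the paper's ALR setting rather than out of logical necessity, and your proof goes through without it.
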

We note that \cref{cor:Ninfinity} is purely illustrative, and we strongly suspect that none of its conditions are necessary. Indeed, our experiments in \cref{sec:experiments} show that the bounds of \cref{thm:mainThm} imply reasonably low error for non-bounded derivatives with ALR data and only moderate $N$.

\section{Accuracy of exact ACV with approximately low-rank data} \label{sec:accuracy} 

Recall that the main idea behind \cref{alg:mainAlgorithm} is to compute a fast approximation to existing ACV methods
by exploiting ALR structure.
To prove our error bounds,
we begin by proving that the exact ACV methods $\NS$ and $\IJ$ approximately (respectively, exactly) retain
the low-dimensional accuracy displayed in red in \cref{highDAccuracy} when applied to GLMs
with approximately (respectively, exactly) low-rank data.
We show the case for exactly low-rank data first.
Let $X = U\Sigma V^T$ be the singular value decomposition of $X$,
where $U \in \R^{N \times D}$ has orthonormal columns,
$\Sigma \in \R^{D \times D}$ is a diagonal matrix with only $R \ll D$ non-zero entries,
and $V \in \R^{D \times D}$ is an orthonormal matrix.
Let $V_{:R}$ denote the first $R$ columns of $V$,
and fit a model restricted to $R$ dimensions as:
$$
	\hat\phi := \argmin_{\phi \in \R^R} \frac{1}{N} \sum_{n=1}^N f((V_{:R}^T x_n)^T \phi) + \frac{\lambda}{2} \| \phi \|_2^2.
$$
Let $\restrictedThetan$ be the $n$th leave-one-out parameter estimate from this problem, and let $\restrictedNS$ and $\restrictedIJ$ be \cref{NSdef} and \cref{IJdef} applied to this restricted problem. We can now show that the error of $\IJ$ and $\NS$ applied to the full $D$-dimensional problem is exactly the same as the error of $\restrictedNS$ and $\restrictedIJ$ applied to the restricted $R \ll D$ dimensional problem.
\begin{lm} \label{prop:exactLowRankAccuracy}
	Assume that the data matrix $X$ is exactly rank $R$.
	Then 
	$
		\abs{ x_n^T \NS - x_n^T \thetan} = \abs{ (V_{:R}^T x_n)^T \restrictedNS - (V_{:R}^T x_n)^T \restrictedThetan}
	$ 
	and 
	$	
		\abs{ x_n^T \IJ - x_n^T \thetan} = \abs{(V_{:R}^T x_n)^T \restrictedIJ - (V_{:R}^T x_n)^T \restrictedThetan}
	$.
\end{lm}
See \cref{app:ELRAccuracy} for a proof.
Based on previous work (e.g., \citep{beirami:2017:firstALOO, rad:2018:detailedALOO, giordano:2018:ourALOO}),
we expect the ACV errors $\abs{(V_{:R}^T x_n)^T \restrictedNS - (V_{:R}^T x_n)^T \restrictedThetan}$ and $\abs{ (V_{:R}^T x_n)^T \restrictedIJ - (V_{:R}^T x_n)^T \restrictedThetan}$ to be small,
as they represent the errors of $\NS$ and $\IJ$ applied to an $R$-dimensional problem.
We confirm \cref{prop:exactLowRankAccuracy} numerically in \cref{highDAccuracy}, where the error for the $D = 40$ problems (red) exactly matches that of the high-$D$ but rank-40 problems (black).

However, real-world covariate matrices $X$ are rarely exactly low-rank. By adapting results from \citet{wilson:2020:modelSelectionALOO}, we can give bounds that smoothly decay as we leave the exact low-rank setting of \cref{prop:exactLowRankAccuracy}. 
To that end, define:
\begin{equation}
 L_n := \left( \frac{1}{N} \sum_{m: \, m\neq n}^N \n{x_m}_2^2 \right) \max_{s \in [0,1]} \dnthree\left(x_n^T ((1-s)\hat\theta + s\thetan) \right).
 \label{LnDef}
\end{equation}
\begin{lm} \label{prop:appxLowRankAccuracy}
	Assume that $\lambda > 0$. Then, for all $n$:
	\begin{align}
		& \abs{x_n^T \NS - x_n^T \thetan} \leq \frac{L_n}{N^2 \lambda^3} \abs{\dnone}^2 \n{x_n}_2^3 \label{appxBoundNS} \\
		& \abs{x_n^T \IJ - x_n^T \thetan} \leq \frac{L_n}{N^2 \lambda^3} \abs{\dnone}^2 \n{x_n}_2^3 + \frac{1}{N^2 \lambda^2} \abs{\dnone} \dntwo \n{x_n}_2^4. \label{appxBoundIJ}
	\end{align}
	Furthermore, these bounds continuously decay as the data move from exactly to approximately low rank in that they are continuous in the singular values of $X$.
\end{lm}
The proofs of \cref{appxBoundNS,appxBoundIJ} mostly follow from results in \citet{wilson:2020:modelSelectionALOO},
although our results removes a Lipschitz assumption on the $\dntwo$; see \cref{app:ALRAccuracy} for a proof.


Our bounds are straightforward to compute; we can calculate
the norms $\| x_n \|_2$ and evaluate
the derivatives $\dnone$ and $\dntwo$ at the known $x_n^T \hat\theta$.
The only unknown quantity is $L_n$.
However, we can upper bound the $L_n$ using the following proposition.
\begin{prop} \label{prop:MnBound}
	Let $\mathcal{Z}_n$ be the set of $z \in \R$ such that $\abs{z} \leq \abs{x_n^T \hat\theta} + \abs{\dnone} \| x_n\|_2^2 / (N\lambda)$. For $L_n$ as defined in \cref{LnDef}, we have the upper bound:
	\begin{equation}
		L_n \leq M_n := \max_{z \in \mathcal{Z}_n} \abs{\dnthree(z)} \left( \frac{1}{N} \sum_{m: \, m\neq n}^N \n{x_m}_2^2 \right).
	\end{equation}
\end{prop}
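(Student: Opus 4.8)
The plan is to show that every point $z$ of the form $x_n^T((1-s)\hat\theta + s\thetan)$ with $s \in [0,1]$ lies in the interval $\mathcal{Z}_n$, after which the bound follows immediately by monotonicity of the $\max$: replacing the maximum over the (line-segment image) set by the maximum over the larger set $\mathcal{Z}_n$ can only increase the value, and the $\frac1N\sum_{m\neq n}\|x_m\|_2^2$ factor is common to both $L_n$ and $M_n$. So the only real content is a bound on $|x_n^T\thetan|$ (and hence on $|x_n^T((1-s)\hat\theta + s\thetan)|$ by convexity of the absolute value along the segment) in terms of $|x_n^T\hat\theta|$ and the stated correction term.

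First I would write, for any $s\in[0,1]$,
\begin{equation*}
\bigl|x_n^T((1-s)\hat\theta + s\thetan)\bigr| \le (1-s)\,|x_n^T\hat\theta| + s\,|x_n^T\thetan| \le \max\bigl\{|x_n^T\hat\theta|,\,|x_n^T\thetan|\bigr\},
\end{equation*}
so it suffices to control $|x_n^T\thetan|$. The key step is a perturbation estimate comparing $\thetan$ to $\hat\theta$. I would use the first-order optimality conditions for the two regularized objectives: $\hat\theta$ zeroes the full gradient, $\thetan$ zeroes the gradient of the leave-$n$-out objective. Subtracting, and using that the two objectives differ only by the single term $\tfrac1N f(x_n^T\theta,y_n)$ (plus the shared regularizer), a mean-value/Taylor argument gives $\thetan - \hat\theta = \widetilde H^{-1}\bigl(\tfrac1N \hat D_n^{(1)} x_n\bigr)$ for some Hessian-like matrix $\widetilde H \succeq \lambda I_D$ evaluated along the segment (by convexity of $f$ in its first argument, the $f$-contribution to this Hessian is positive semidefinite, so $\widetilde H^{-1}$ has operator norm at most $1/\lambda$). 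Taking inner product with $x_n$ and applying Cauchy--Schwarz,
\begin{equation*}
|x_n^T\thetan - x_n^T\hat\theta| \le \frac{|\hat D_n^{(1)}|}{N}\, x_n^T \widetilde H^{-1} x_n \le \frac{|\hat D_n^{(1)}|\,\|x_n\|_2^2}{N\lambda},
\end{equation*}
whence $|x_n^T\thetan| \le |x_n^T\hat\theta| + |\hat D_n^{(1)}|\,\|x_n\|_2^2/(N\lambda)$, which is exactly the defining inequality for $\mathcal{Z}_n$. Combining with the displayed convex-combination bound shows $x_n^T((1-s)\hat\theta + s\thetan)\in\mathcal{Z}_n$ for all $s$, and the proposition follows.

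The main obstacle is making the perturbation step fully rigorous: one must be careful that $\hat D_n^{(1)}$ here denotes the derivative of $f$ at the full-data fitted value $x_n^T\hat\theta$ (consistent with the paper's notation), and that the Hessian appearing in the mean-value expansion is taken at an intermediate point, so its $f$-part is $f''$ evaluated somewhere on the segment rather than at $\hat\theta$ — this is harmless since we only need the crude bound $\widetilde H \succeq \lambda I_D$, which holds regardless of where $f''\ge 0$ is evaluated. A secondary point worth spelling out is the direction of the mean-value expansion: one can either expand the leave-$n$-out gradient around $\hat\theta$ evaluated at $\thetan$, or the full gradient around $\thetan$ evaluated at $\hat\theta$; either works, and one should pick whichever makes the $\hat D_n^{(1)}$ term come out at the point $x_n^T\hat\theta$ as written. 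Everything else is routine. This argument is essentially the standard ``one-step Newton / influence function'' bound and is surely the same one used for the related estimates (e.g.\ around \cref{NSdef,IJdef}) elsewhere in the paper; see \cref{app:NSIJDerivations}.
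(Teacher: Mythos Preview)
Your proposal is correct and matches the paper's (implicit) argument: the paper does not spell out a separate proof of \cref{prop:MnBound}, but the intended route is exactly to bound $|x_n^T(\thetan-\hat\theta)|$ by $|\hat D_n^{(1)}|\,\|x_n\|_2^2/(N\lambda)$ and conclude that every point on the segment lands in $\mathcal{Z}_n$. The only cosmetic difference is that the paper obtains this perturbation bound via the strong-convexity estimate $\|\thetan-\hat\theta\|_2\le |\hat D_n^{(1)}|\,\|x_n\|_2/(N\lambda)$ of \cref{lm:thetaHatMinusThetan} followed by Cauchy--Schwarz, whereas you use the integral mean-value representation $\thetan-\hat\theta=\widetilde H^{-1}\bigl(\tfrac1N\hat D_n^{(1)}x_n\bigr)$ with $\widetilde H\succeq\lambda I_D$ and bound the quadratic form directly; the two are equivalent.
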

To compute an upper bound on the $M_n$ in turn,
we can optimize $\dnthree(z)$ for $\abs{z} \leq \abs{x_n^T \hat\theta} + \abs{\dnone} \| x_n\|_2^2 / (N\lambda)$.
This scalar problem is straightforward for common GLMs:
for logistic regression, we can use the fact that $\abs{\dnthree} \leq 1/4$,
and for Poisson regression with an exponential link function
(i.e., $y_n \sim \mathrm{Poisson}(\exp(x_n^T \theta))$),
we maximize $\dnthree(z) = e^z$ with the largest $z \in \mathcal{Z}_n$.

\section{Approximating the quadratic forms $Q_n$} \label{sec:computation}
\mnote{Would be nice to find a good name for $Q_n$ to use throughout.
It's particularly unpleasant to use $Q_n$ (rather than its name) in a section header.
But I can't think of a good name\ldots
It measures $x_n$ in the inverse Hessian norm\ldots
It's really a measure for how much of an outlier $x_n$ is\ldots}

\begin{algorithm}
    \caption{Estimate $Q_n = x_n^T (B + \lambda I_D)\inv x_n$ via a rank-$K$ decomposition of PSD matrix $B$. \emph{Note}: as written, this procedure is not numerically stable. See \cref{app:computationImplementation} for an equivalent but numerically stable version.}
    \label{alg:customizedTropp}
    \begin{algorithmic}[1] 
        \Procedure{AppxQn}{$B,K,\lambda$}
            \For{$k = 1, \dots, K$}
        		\State $\mathcal{E}_k \gets \normaldist (0_{D}, I_{D})$ \Comment{$\mathcal{E} \in \R^{D \times K}$ has i.i.d. $\normaldist(0,1)$ entries}
        	\EndFor
        	\State $\Omega \gets \Call{OrthonormalizeColumns}{ \mathrm{diag}\{1/(B_{dd} + \lambda)\}_{d=1}^D X^T X \mathcal{E} }$ \label{line:omegaChoice} \Comment{\cref{prop:optimalAgree}}
        	\State $M \gets B\Omega$
        	\State $\tildeH \gets M (\Omega^T M)\inv M^T + \lambda I_D$ \Comment{Rank-$K$ Nystr\"{o}m approximation of $B$}
        	 \For{$n = 1, \dots, N$}
                \State $\tildeqn \gets \mathrm{min} \set{ x_n^T \tildeH\inv x_n, \; \n{x_n}_2^2 / (\lambda + \dntwo \n{x_n}_2^2)}$ \Comment{\cref{prop:qnBound}}
           \EndFor
        \State \textbf{return} $\{ \tildeqn \}_{n=1}^N$
        \EndProcedure
    \end{algorithmic}
\end{algorithm}

The results of \cref{sec:accuracy} imply that existing ACV methods achieve high accuracy
on GLMs with ALR data.
However, in high dimensions, the $O(D^3)$ cost of computing $H\inv$ in the $Q_n$
can be prohibitive.
\citet{koh:2017:influenceFunctions,lorraine:2020:hyperparamOpt} have investigated an approximation to the 
matrix inverse for problems similar to ACV; however, in our experiments in \cref{app:neumannNoGood}, we find that this method does not work well for ACV.
Instead, we give approximations $\tildeqn \approx Q_n$
in \cref{alg:customizedTropp} along with computable upper bounds on the error $\abs{\tildeqn - Q_n}$ in \cref{prop:qnBound}.
When the data has ALR structure, so does the Hessian $H$;
hence we propose a low-rank matrix approximation to $H$.
This gives \cref{alg:customizedTropp} a runtime in $O(NDK + K^3)$, which can result in substantial savings
relative to the $O(ND^2 + D^3)$ time required to exactly compute the $Q_n$.
We will see that the main insights behind \cref{alg:customizedTropp} come from studying an upper bound on the approximation error when using a low-rank approximation.

Observe that by construction of $\Omega$ and $\tildeH$ in \cref{alg:customizedTropp}, the approximate Hessian $\tildeH$
exactly agrees with $H$ on the subspace $\Omega$.
\mnote{Might be nice to mention this closer to \cref{alg:customizedTropp}.}
We can compute an upper bound on the error $\abs{x_n^T \tildeH\inv x_n - Q_n}$
by recognizing that any error originates from components of $x_n$ orthogonal to $\Omega$:
\begin{prop} \label{prop:basicErrorBound}
	Let $\lambda > 0$ and suppose there is some subspace $\mathcal{B}$ on which $H$ and $\tildeH$ exactly agree:
  $\forall v \in \mathcal B, Hv = \tildeH v$.
  Then $H\inv$ and $\tildeH\inv$ agree exactly on the subspace $\agree := H \mathcal{B}$, and
	\begin{equation}
		\abs{x_n^T \tildeH\inv x_n - Q_n} \leq \frac{\n{P^\perp_\agree x_n}_2^2}{\lambda},
    \qquad \text{for all~} n = 1,\dots, N,
	\label{basicErrorBound}
	\end{equation}
	where $P^\perp_\agree$ denotes projection onto the orthogonal complement of $\mathcal{A}$.
\end{prop}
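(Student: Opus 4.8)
The plan is to decompose $x_n$ into its component in $\mathcal{A} := H\mathcal{B}$ and its component orthogonal to $\mathcal{A}$, argue that only the orthogonal component can contribute to the discrepancy $x_n^T \tildeH\inv x_n - Q_n$, and then bound that contribution with a crude operator-norm estimate.

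First I would verify the auxiliary claim that $H\inv$ and $\tildeH\inv$ agree on $\mathcal{A}$: given $w \in \mathcal{A}$, write $w = Hv$ with $v \in \mathcal{B}$; since $Hv = \tildeH v$ by hypothesis we also have $w = \tildeH v$, so $H\inv w = v = \tildeH\inv w$. (Incidentally $\mathcal{A} = H\mathcal{B} = \tildeH\mathcal{B}$ has dimension $\dim \mathcal{B}$ because $H$ is invertible.) Next, letting $P_\mathcal{A}$ be the orthogonal projection onto $\mathcal{A}$, I would set $a := P_\mathcal{A} x_n$ and $b := P^\perp_\mathcal{A} x_n$ so that $x_n = a + b$ with $a \in \mathcal{A}$ and $b \perp \mathcal{A}$. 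Expanding the quadratic forms and using the symmetry of $H\inv$ and $\tildeH\inv$,
\[
  x_n^T H\inv x_n = a^T H\inv a + 2 (H\inv a)^T b + b^T H\inv b ,
\]
and likewise for $\tildeH\inv$. Since $a \in \mathcal{A}$, the first claim gives $H\inv a = \tildeH\inv a$, so the $a^T(\cdot)a$ terms and the cross terms coincide across the two expansions, leaving
\[
  x_n^T \tildeH\inv x_n - Q_n = b^T (\tildeH\inv - H\inv) b .
\]

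Finally I would bound this last quantity. Convexity of $f$ in its first argument gives $H \succeq \lambda I_D$, and $\tildeH \succeq \lambda I_D$ because the low-rank (Nystr\"om) correction $M(\Omega^T M)\inv M^T$ in \cref{alg:customizedTropp} is positive semidefinite; hence $0 \preceq H\inv \preceq \lambda\inv I_D$ and $0 \preceq \tildeH\inv \preceq \lambda\inv I_D$. Subtracting a PSD matrix from each bound shows $-\lambda\inv I_D \preceq \tildeH\inv - H\inv \preceq \lambda\inv I_D$, so every eigenvalue of $\tildeH\inv - H\inv$ lies in $[-\lambda\inv, \lambda\inv]$ and therefore $\abs{b^T(\tildeH\inv - H\inv)b} \le \n{b}_2^2/\lambda = \n{P^\perp_\mathcal{A} x_n}_2^2 / \lambda$, which is the claimed bound.

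The argument is essentially bookkeeping, so I do not expect a genuine obstacle; the one subtle point is the last step, where the naive triangle inequality $\abs{b^T H\inv b} + \abs{b^T \tildeH\inv b} \le 2\n{b}_2^2/\lambda$ would cost a factor of two — recovering the stated constant $1/\lambda$ is exactly what forces the positive-semidefinite ordering argument and, with it, the need to record that $\tildeH \succeq \lambda I_D$.
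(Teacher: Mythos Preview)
Your proof is correct and follows the same route as the paper: establish that $H\inv$ and $\tildeH\inv$ agree on $\mathcal{A}$, reduce the discrepancy to $(P^\perp_{\mathcal{A}} x_n)^T(\tildeH\inv - H\inv)(P^\perp_{\mathcal{A}} x_n)$, and then bound this quadratic form using that both inverses have eigenvalues in $[0,1/\lambda]$. The only cosmetic difference is in the last step---the paper invokes that $\tildeH\inv$ acts as $(1/\lambda)I$ on $\mathcal{A}^\perp$ together with $0 \preceq H\inv \preceq (1/\lambda)I$, whereas you use the global PSD ordering $0 \preceq H\inv,\,\tildeH\inv \preceq (1/\lambda)I$ directly---but the argument and the resulting constant are otherwise identical.
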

For a proof, see \cref{app:computationProofs}.
The bound from \cref{basicErrorBound} is easy to compute in $O(DK)$ time given a basis for $\fwdAgree$.
It also motivates the choice of $\Omega$ in \cref{alg:customizedTropp}.
In particular, \cref{prop:optimalAgree} shows that $\Omega$ approximates
the rank-$K$ subspace $\fwdAgree$ that minimizes the average of the bound in \cref{basicErrorBound}.
\begin{prop} \label{prop:optimalAgree}
	Let $V_{:K} \in \R^{D \times K}$ be the matrix with columns equal to the right singular vectors of $X$ corresponding to the $K$ largest singular values. Then the rank-$K$ subspace $\fwdAgree$ minimizing $\sum_n \| P^\perp_\agree x_n\|_2^2$ is an orthonormal basis for the columns of $H\inv V_{:K}$.
\end{prop}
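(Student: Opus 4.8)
The plan is to decouple $\fwdAgree$ from $\agree$ using invertibility of $H$, reduce the problem to a classical extremal-eigenvalue statement about $X^T X$, and then translate the optimizer back. Since $\lambda > 0$ and $B = X^T \mathrm{diag}\{\dntwo\}_{n=1}^N X \succeq 0$, the Hessian $H = B + \lambda I_D$ is positive definite and hence invertible, so the map $\fwdAgree \mapsto \agree = H\fwdAgree$ is a bijection on the set of $K$-dimensional subspaces of $\R^D$. Consequently, minimizing $\sum_n \n{P^\perp_\agree x_n}_2^2$ over admissible $\fwdAgree$ is equivalent to minimizing $g(\agree) := \sum_n \n{P^\perp_\agree x_n}_2^2$ over \emph{all} rank-$K$ subspaces $\agree$, and then taking $\fwdAgree = H\inv \agree^\star$ at the optimizer $\agree^\star$.

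Next I would put $g$ in trace form. For any rank-$K$ subspace $\agree$ with orthogonal projector $P_\agree$, since $\sum_n x_n x_n^T = X^T X$ and $P_\agree$ is idempotent,
\[
  g(\agree) = \sum_n x_n^T (I_D - P_\agree) x_n = \mathrm{tr}\big( (I_D - P_\agree) X^T X \big) = \mathrm{tr}(X^T X) - \mathrm{tr}(P_\agree X^T X).
\]
Thus minimizing $g$ is the same as maximizing $\mathrm{tr}(P_\agree X^T X)$ over rank-$K$ subspaces. Writing the eigendecomposition $X^T X = V \, \mathrm{diag}(\sigma_1^2, \dots, \sigma_D^2) \, V^T$ with $\sigma_1 \geq \cdots \geq \sigma_D \geq 0$ and parametrizing $\agree = \mathrm{range}(W)$ with $W^T W = I_K$ so that $\mathrm{tr}(P_\agree X^T X) = \mathrm{tr}(W^T X^T X W)$, the Ky Fan maximum principle (equivalently Courant--Fischer, or Eckart--Young--Mirsky applied to $\n{X - X P_\agree}_F$) gives $\max_\agree \mathrm{tr}(P_\agree X^T X) = \sum_{k=1}^K \sigma_k^2$, attained at $W = V_{:K}$, i.e.\ $\agree^\star = \mathrm{range}(V_{:K})$. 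Pulling back through the bijection yields $\fwdAgree = H\inv \, \mathrm{range}(V_{:K}) = \mathrm{range}(H\inv V_{:K})$; since $H\inv$ is invertible and the columns of $V_{:K}$ are orthonormal, $H\inv V_{:K}$ has linearly independent columns and hence admits an orthonormal basis -- and the $\Omega$ chosen in \cref{alg:customizedTropp} is precisely an orthonormal basis for a (randomized, diagonally preconditioned) approximation of this subspace.

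The argument is essentially bookkeeping: the only non-routine ingredient is the Ky Fan step, which is standard, so I do not anticipate a real obstacle. The one point worth a remark is uniqueness -- "the" optimal subspace is well defined exactly when $\sigma_K > \sigma_{K+1}$ (and $\sigma_K > 0$); when $\sigma_K = \sigma_{K+1}$ the maximizer is unique only up to the choice of a $K$-dimensional subspace inside the tied singular subspace, so the statement should be read as identifying \emph{an} optimizer. I would flag this parenthetically rather than dwell on it.
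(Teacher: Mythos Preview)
Your proposal is correct and follows essentially the same route as the paper: both reduce the optimization over $\fwdAgree$ to an optimization over $\agree = H\fwdAgree$ via the invertibility of $H$, identify the objective with a best rank-$K$ approximation problem for $X$, and invoke Eckart--Young (your Ky Fan / trace formulation is the same result in a different guise). The paper's proof is terser---it jumps directly to $\sum_n \|P^\perp_\agree x_n\|_2^2 = \|(I_D - P_\agree) X^T\|_F^2$ and cites Eckart--Young---while you spell out the bijection and the trace identity more explicitly and add a useful remark on non-uniqueness when $\sigma_K = \sigma_{K+1}$; but the substance is identical.
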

\begin{proof}
	$\sum_n \| P^\perp_\agree x_n\|_2^2 = \| (I_D - P_\agree) X^T \|_F^2$, where $\|\cdot\|_F$ denotes the Frobenius norm. Noting that the given choice of $\fwdAgree$ implies that $\agree = V_{:K}$, the result follows from the Eckart-Young theorem.
\end{proof}
We can now see that the choice of $\Omega$ in \cref{alg:customizedTropp} approximates
the optimal choice $H\inv V_{:K}$.
In particular, we use a single iteration of the subspace iteration method \citep{bathe:1973:subspaceIteration}
to approximate $V_{:K}$ and then multiply by
the diagonal approximation $\mathrm{diag}\set{1 / H_{dd}}_{d=1}^D \approx H\inv$.
This approximation uses the top singular vectors of $X$.
We expect these directions to be roughly equivalent to
the largest eigenvectors of $B := \sum_n \dntwo x_n x_n^T$,
which in turn are the largest eigenvectors of $H = B + \lambda I_D$.
Thus we are roughly approximating $H$ by its \emph{largest} $K$ eigenvectors.

Why is it safe to neglect the small eigenvectors?
At first glance this is strange, as to minimize the operator norm $\| H\inv - \tildeH\inv\|_{op}$,
one would opt to preserve the action of $H$ along its \emph{smallest} $K$ eigenvectors.
The key intuition behind this reversal is that we are interested in the action of $H\inv$
in the direction of the datapoints $x_n$,
which, on average, tend to lie near the largest eigenvectors of $H$.

\cref{alg:customizedTropp} uses one additional insight to improve the accuracy of its estimates.
In particular, we notice that, by the definition of $H$, each $x_n$ lies in an eigenspace of $H$ with eigenvalue at least $\dntwo \| x_n \|_2^2 + \lambda$.
This observation undergirds the following result,
which generates our final estimates $\tildeqn \approx Q_n$,
along with quickly computable bounds on their error $\abs{\tildeqn - Q_n}$.
\begin{prop} \label{prop:qnBound}
	The $Q_n = x_n^T H\inv x_n$ satisfy
	$
		0 < Q_n \leq \|x_n\|_2^2 / (\lambda + \dntwo \| x_n \|_2^2).
	$
	Furthermore, letting $\tildeqn := \mathrm{min} \{ x_n^T \tildeH\inv x_n, \, \n{x_n}_2^2 / (\lambda + \dntwo \n{x_n}_2^2) \}$, we have the error bound
	\begin{equation}
		\abs{\tildeqn - Q_n} \leq \min\set{\frac{\n{P^\perp_\agree x_n}_2^2}{\lambda}, \frac{\|x_n\|_2^2}{\lambda + \dntwo \|x_n\|_2^2}}.
		\label{qntildeBound}
	\end{equation}
\end{prop}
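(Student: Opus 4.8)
The plan is to prove the two assertions separately, in each case exploiting the fact that the datapoint $x_n$ lies in a direction of $H$ whose eigenvalue is at least $\lambda + \dntwo \n{x_n}_2^2$.

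\textbf{The bound on $Q_n$.} Write $B = \sum_{m} \hat D^{(2)}_m x_m x_m^T$ (so $H = B + \lambda I_D$) and split off the $n$th term: $H = G_n + C_n$ with $G_n := \lambda I_D + \dntwo x_n x_n^T$ and $C_n := \sum_{m\neq n} \hat D^{(2)}_m x_m x_m^T$. Convexity of $f$ in its first argument gives $\hat D^{(2)}_m \ge 0$ for every $m$, so $C_n \succeq 0$ and (since $\lambda > 0$) $H \succ 0$; hence $H\inv \succ 0$ and $Q_n = x_n^T H\inv x_n > 0$ whenever $x_n \neq 0$ (the claim is trivial otherwise). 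For the upper bound, discard the positive-semidefinite $C_n$: $0 \prec G_n \preceq H$, so by antimonotonicity of matrix inversion $H\inv \preceq G_n\inv$ and thus $Q_n \le x_n^T G_n\inv x_n$. Since $G_n x_n = (\lambda + \dntwo\n{x_n}_2^2)\,x_n$, the vector $x_n$ is an eigenvector of $G_n$ and $x_n^T G_n\inv x_n = \n{x_n}_2^2 / (\lambda + \dntwo \n{x_n}_2^2)$, which is the claimed bound (Sherman--Morrison gives the same value directly).

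\textbf{The error bound.} Abbreviate $u_n := \n{x_n}_2^2 / (\lambda + \dntwo \n{x_n}_2^2)$, so that $\tildeqn = \min\set{x_n^T \tildeH\inv x_n,\, u_n}$ and, by the first part, $Q_n \in (0, u_n]$. First I record $\tildeH \succeq \lambda I_D \succ 0$: the Nystr\"{o}m term equals $B\Omega(\Omega^T B\Omega)\inv\Omega^T B = B^{1/2} P B^{1/2}$, where $P$ is the orthogonal projector onto the range of $B^{1/2}\Omega$, hence positive semidefinite. Thus $\tildeH\inv$ exists, $x_n^T \tildeH\inv x_n > 0$, and $\tildeqn \in (0, u_n]$ as well; since both $Q_n$ and $\tildeqn$ lie in $(0, u_n]$, we get $\abs{\tildeqn - Q_n} \le u_n$. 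Next, the clipping can only help: $\abs{\tildeqn - Q_n} \le \abs{x_n^T \tildeH\inv x_n - Q_n}$. Indeed, if $x_n^T \tildeH\inv x_n \le u_n$ then $\tildeqn = x_n^T \tildeH\inv x_n$ and this is an equality; if $x_n^T \tildeH\inv x_n > u_n$ then $\tildeqn = u_n \ge Q_n$, so $\abs{\tildeqn - Q_n} = u_n - Q_n < x_n^T \tildeH\inv x_n - Q_n = \abs{x_n^T \tildeH\inv x_n - Q_n}$. Finally, because $\Omega^T M = \Omega^T B\Omega$ is invertible one checks $\tildeH\Omega = B\Omega + \lambda\Omega = H\Omega$, i.e.\ $H$ and $\tildeH$ agree on $\mathcal{B} = \Omega$; \cref{prop:basicErrorBound} (with $\agree = H\Omega$) then gives $\abs{x_n^T \tildeH\inv x_n - Q_n} \le \n{P^\perp_\agree x_n}_2^2 / \lambda$. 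Chaining the three bounds yields $\abs{\tildeqn - Q_n} \le \min\set{\n{P^\perp_\agree x_n}_2^2/\lambda,\ u_n}$, as desired.

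\textbf{Main obstacle.} There is no serious obstacle here; the argument is careful bookkeeping with positive-semidefinite orderings. The two points needing a moment's care are (i) checking that the raw Nystr\"{o}m quadratic form $x_n^T \tildeH\inv x_n$ is well defined and strictly positive, so that clipping it into $(0, u_n]$ is meaningful, and (ii) verifying the elementary fact that projecting an estimate onto an interval containing the target can only reduce the error, which is exactly what licenses replacing $x_n^T \tildeH\inv x_n$ by $\tildeqn$ inside the bound of \cref{prop:basicErrorBound}.
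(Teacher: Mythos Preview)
Your proof is correct and, for the first part, uses a different (and cleaner) device than the paper. The paper argues via the eigendecomposition of $H$: writing $b_n := \sqrt{\dntwo}\,x_n$ and the eigenvalues of $B$ as $\gamma_1 \ge \dots \ge \gamma_D$, it observes that $b_n^T H\inv b_n$ would be largest when $b_n$ is aligned with the smallest eigenvector, but in that case $\gamma_D \ge \|b_n\|_2^2$ since $b_n b_n^T$ is a summand of $B$; dividing by $\dntwo$ yields the bound. Your splitting $H = G_n + C_n$ with $G_n = \lambda I_D + \dntwo x_n x_n^T$ and $C_n \succeq 0$, followed by Loewner antimonotonicity $H\inv \preceq G_n\inv$ and the exact evaluation $x_n^T G_n\inv x_n = \|x_n\|_2^2/(\lambda + \dntwo\|x_n\|_2^2)$, is more direct and avoids the somewhat informal ``if $b_n$ were parallel to $v_D$'' reasoning. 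For the error bound, the paper simply says the claim follows from \cref{prop:basicErrorBound} together with the just-proved range $Q_n \in (0,u_n]$; you unpack this by explicitly verifying that $\tildeH \succ 0$ (so $\tildeqn \in (0,u_n]$ as well), that clipping to $u_n$ can only decrease the error, and that $\tildeH$ and $H$ agree on $\Omega$ so that \cref{prop:basicErrorBound} applies --- all of which is exactly what is needed and what the paper leaves implicit.
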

See \cref{app:computationProofs} for a proof. We finally note that \cref{alg:customizedTropp} strongly resembles algorithms from the randomized numerical linear algebra literature. Indeed, the work of \citet{tropp:2017:psdRandomizedApproximation} was the original inspiration for \cref{alg:customizedTropp}, and \cref{alg:customizedTropp} can be seen as an instance of the algorithm presented in \citet{tropp:2017:psdRandomizedApproximation} with specific choices of various tuning parameters optimized for our application. For more on this perspective, see \cref{app:tropp}.

\section{Experiments} \label{sec:experiments}

\begin{figure}
\centering
\begin{tabular}{cc}
	\includegraphics[scale=0.45]{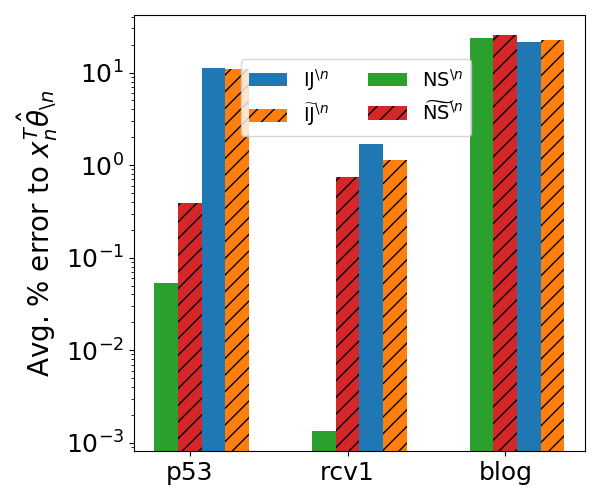} &
	\includegraphics[scale=0.45]{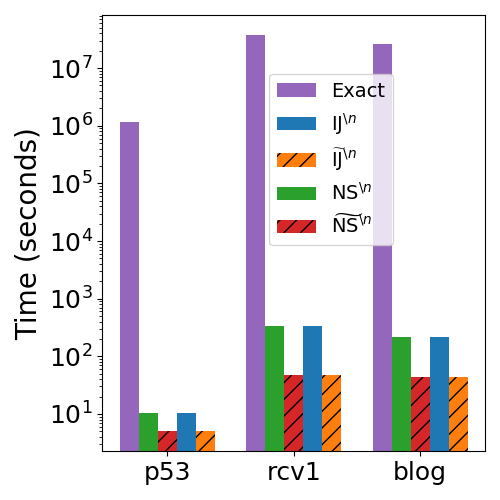}
\end{tabular}
\caption{Experiments on real datasets. (\emph{Left}): average percent error compared to exact CV on a subset of datapoints, $(1/20) \sum_{b=1}^{20} \abs{ x_b^T \mathrm{approx.} - x_b^T \hat\theta_{\backslash b}} / \abs{x_b^T \hat\theta_{\backslash b}}$, where $\mathrm{approx.}$ denotes $\NS, \tildeNS, \IJ,$ or $\tildeIJ$.
 (\emph{Right}): ACV runtimes with exact CV runtimes for comparison. ACV runtimes are given for all $N$ datapoints. Exact CV runtimes are estimated runtimes for all $N$ datapoints.}
 \label{fig:realDataExperiments}
\end{figure}

\paragraph{\cref{alg:mainAlgorithm} on real data.}
We begin by confirming the accuracy and speed of \cref{alg:mainAlgorithm} on real data compared to both exact CV and existing ACV methods.
We apply logistic regression to two datasets (\texttt{p53} and \texttt{rcv1})
and Poisson regression to one dataset (\texttt{blog}).
\texttt{p53} has a size of roughly $N = 8{,}000, D = 5{,}000$, and the remaining two have roughly $N = D = 20{,}000$; see \cref{app:realDataExperiments} for more details.
We run all experiments with $\lambda = 5.0$.
To speed up computation of exact LOOCV, we only run over twenty randomly chosen datapoints.
We report average percent error, $ (1/20) \sum_{b=1}^{20} \abs{x_b^T \mathrm{appx.} - x_b^T \hat\theta_{\backslash b}} / \abs{x_b^T \hat\theta_{\backslash b}}$
for each exact ACV algorithm and the output of \cref{alg:mainAlgorithm}.
For the smaller dataset \texttt{p53},
the speedup of \cref{alg:mainAlgorithm} over exact $\NS$ or $\IJ$ is marginal;
however, for the larger two datasets, our methods provide significant speedups:
for the \texttt{blog} dataset, we estimate the runtime of full exact CV to be nearly ten months.
By contrast, the runtime of $\NS$ is nearly five minutes,
and the runtime of $\tildeNS$ is forty seconds.
In general, the accuracy of $\tildeIJ$ closely mirrors the accuracy of $\IJ$,
while the accuracy of $\tildeNS$ can be somewhat worse than that of $\NS$
on the two logistic regression tasks;
however, we note that in these cases, the error of $\tildeNS$ is still less than 1\%.

\begin{figure}
\centering
\begin{tabular}{cc}
	\includegraphics[scale=0.45]{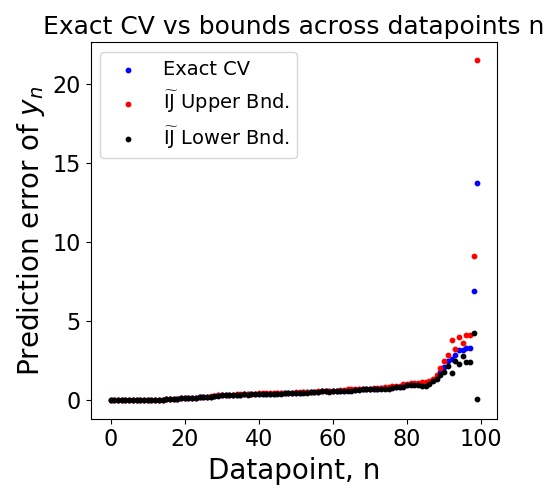} &
	\includegraphics[scale=0.45]{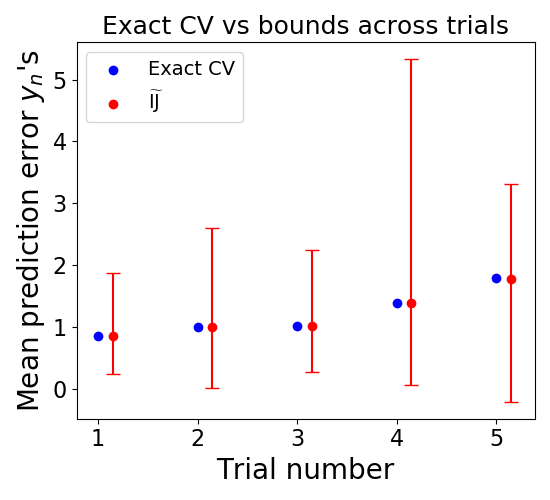}
\end{tabular}
\caption{Error bounds implied by \cref{thm:mainThm} on $\tildeIJ$'s estimate of out-of-sample error using squared loss, $\mathrm{Err}(x_n^T\theta, y_n) = (e^{x_n^T\theta} - y_n)^2$. (\emph{Left}): Per datapoint error bounds. The bound is fairly loose for datapoints with larger squared loss, but tighter for points with lower squared loss. (\emph{Right}): Five trials with estimates averaged across all $n$. We compute the upper (respectively, lower) error bars for $\tildeIJ$ by averaging the upper (respectively, lower) bounds. While our bounds overstate the difference between exact CV and $\tildeIJ$, they still give non-vacuous information on value of exact CV.}
\label{fig:errorBounds}
\end{figure}

\paragraph{Accuracy of error bounds.}
We next empirically check the accuracy of the error bounds from \cref{thm:mainThm}.
We generate a synthetic Poisson regression problem with i.i.d.\ covariates $x_{nd} \sim \normaldist(0,1)$
and $y_n \sim \poissondist(e^{x_n^T \theta^*})$,
where $\theta^* \in \R^D$ is a true parameter with i.i.d.\ $\normaldist(0,1)$ entries.
We generate a dataset of size $N = 800$ and $D = 500$ with covariates of approximate rank $50$; we arbitrarily pick $\lambda = 1.0$. In \cref{fig:errorBounds}, we illustrate the utility of the bounds from \cref{thm:mainThm}
by estimating the
out-of-sample loss with $\mathrm{Err}(x_n^T \theta, y_n) = (e^{x_n^T\theta} - y_n)^2$.
Across five trials, we show the results of exact LOOCV, our estimates provided by $\tildeIJ$,
and the bounds on the error of $\tildeIJ$ given by \cref{thm:mainThm}. While our error bars in \cref{fig:errorBounds} tend to overestimate the difference between $\tildeIJ$ and exact CV, they typically provide upper bounds on exact CV on the order of the exact CV estimate itself.
In some cases, we have observed that the error bars can overestimate exact CV by many orders of magnitude
(see \cref{app:errorBoundExps}),
but this failure is usually due to one or two datapoints $n$ for which the bound is vacuously large.
A simple fix is to resort to exact CV just for these datapoints.

\section{Conclusions}
 We provide an algorithm to approximate CV accurately and quickly in high-dimensional GLMs with ALR structure. Additionally, we provide quickly computable upper bounds on the error of our algorithm. We see two major directions for future work. First, while our theory and experiments focus on ACV for model assessment, the recent work of \citet{wilson:2020:modelSelectionALOO} has provided theoretical results on ACV for model selection (e.g.\ choosing $\lambda$). It would be interesting to see how dimensionality and ALR data plays a role in this setting. Second, as noted in the introduction, we hope that the results here will provide a springboard for studying ALR structure in models beyond GLMs and CV schemes beyond LOOCV.

\ifneurips
	\newpage

\subsection*{Broader Impact}

In general, we feel that work assessing the accuracy of machine learning models will have a positive impact on society. As machine learning is deployed in areas in which mistakes could have adverse effect on peoples' lives, it is important that we understand the error rate of such decisions before deployment. On the other hand, machine learning models can (and are) used for harm and the methods in this paper may assist in the development in such models. Additionally, there is always a risk in introducing any sort of approximation, as it may fail silently and unexpectedly in practice; e.g., our approximations might incorrectly lead a practitioner to conclude that their machine learning model has very small error when the opposite is in fact true. While we believe the computable upper bounds provided here somewhat mitigate this issue, we still remain cautious (though optimistic) about applying ACV methods in practice. Finally, we note that an implicit assumption throughout our work is that computing exact CV is something we want; that is, exact CV provides a good estimate of out-of-sample error. While this seems to be generally true, this does add another failure mode to our algorithm. In particular, even if we provide an accurate approximation to exact CV, it may be that exact CV itself is misleading.
\fi

\ifarxiv
	\subsubsection*{Acknowledgements}
	The authors thank Zachary Frangella for helpful conversations.
WS and TB were supported by the CSAIL-MSR Trustworthy AI Initiative,
an NSF CAREER Award, an ARO Young Investigator Program Award, and ONR
Award N00014-17-1-2072. MU was supported by NSF Award IIS-1943131,
\fi

\bibliography{references}
\bibliographystyle{plainnat}

\newpage
\appendix

\section{Derivation of $x_n^T \NS$ and $x_n^T \IJ$}
\label{app:NSIJDerivations}

Here, we derive the expressions for $x_n^T \NS$ and $x_n^T \IJ$ given in \cref{NSdef,IJdef}. We recall from previous work (e.g., see \citet[Appendix C]{stephenson:2020:sparseALOO} for a summary) that the LOOCV parameter estimates given by the Newton step and infinitesimal jackknife approximations are given by
\begin{align}
	&\thetan \approx \NS := \hat\theta + \frac{1}{N} \left(\sum_{m: \, m\neq n}^N \dntwo x_m x_m^T + \lambda I_D \right)\inv \dnone x_n \\
	&\thetan \approx \IJ := \hat\theta + \frac{1}{N} \left( \sum_{n=1}^N \dntwo x_n x_n^T \lambda I_D \right)\inv \dnone x_n.
\end{align}
Taking the inner product of $\IJ$ with $x_n$ immediately gives \cref{IJdef}. To derive \cref{NSdef}, define $H := \sum_n \dntwo x_n x_n^T + \lambda I_D$ and note that we can rewrite $\NS$ using the Sherman-Morrison formula:
$$
\NS = \hat\theta + \frac{1}{N} \left[ \dnone H\inv x_n + \dnone\dntwo \frac{H\inv x_n x_n^T H\inv}{1 - \dntwo Q_n} x_n \right].
$$
Taking the inner product with $x_n$ and reorganizing gives:
$$
x_n^T \NS = x_n^T \hat\theta + \frac{\dnone}{N} \left[ \frac{Q_n - \dntwo Q_n^2}{1 - \dntwo Q_n} + \frac{\dntwo Q_n^2}{1 - \dntwo Q_n} \right] = x_n^T \hat\theta + \frac{\dnone}{N} \frac{Q_n}{1 - \dntwo Q_n}.
$$
\section{Comparison to existing Hessian inverse approximation}
\label{app:neumannNoGood}

\begin{figure}
	\centering
	\begin{tabular}{cc}
		\includegraphics[scale=0.4]{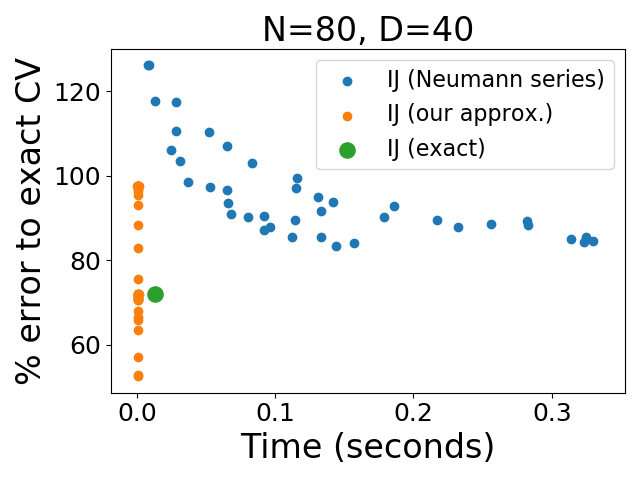} &
		\includegraphics[scale=0.4]{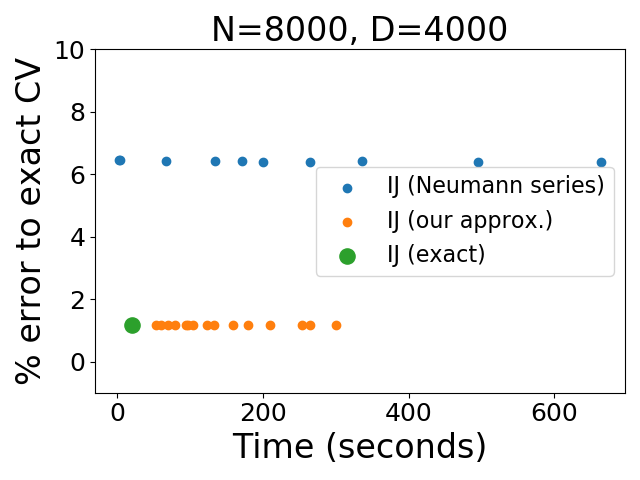} \\
		\includegraphics[scale=0.4]{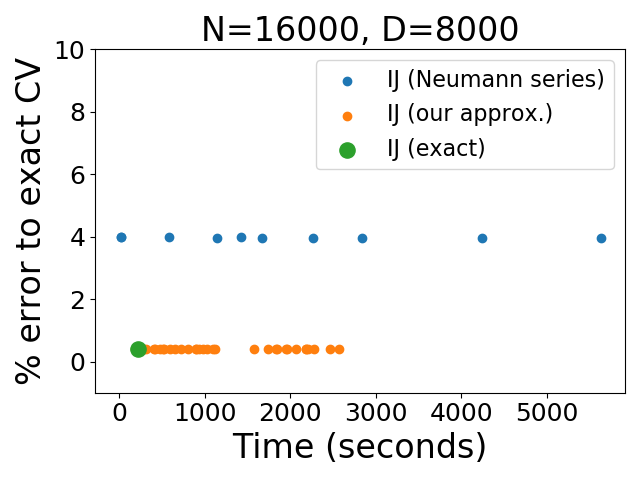} &
		\includegraphics[scale=0.4]{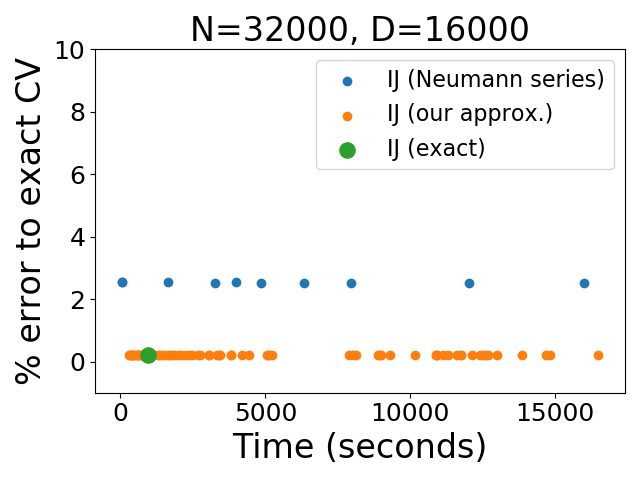}
	\end{tabular}
	\caption{Experiment from \cref{app:neumannNoGood}. Across four different dataset sizes, using the Neumann series approximation (orange) does not show any noticeable improvement on the time scale of running our approximation (green) for all possible values of $K$.}
	\label{fig:neumannNoGood}
\end{figure}

We note that two previous works have used inverse Hessian approximations for applications similar to ACV. \citet{koh:2017:influenceFunctions} use influence functions to estimate behavior of black box models, and \citet{lorraine:2020:hyperparamOpt} use the implicit function theorem to optimize model hyperparameters. In both papers, the authors need to multiply an inverse Hessian by a gradient. To deal with the high dimensional expense associated with this matrix inverse, both sets of authors use the method of \citet{agarwal:2016:lissa}, who propose a stochastic approximation to the \emph{Neumann series}. The Neumann series writes the inverse of a matrix $H$ with operator norm $\|H \|_{op} < 1$ as:
$$
	H\inv = \sum_{k=0}^\infty (I - H)^k.
$$
The observation of \citet{agarwal:2016:lissa} is that this series can be written recursively, as well as estimated stochastically if one has random variables $A_s$ with $\mathbb{E}[A_s] = H$. In the general case of empirical risk minimization with an objective of $(1/N) \sum_{n=1}^N f_n(\theta)$, \citet{agarwal:2016:lissa} propose using $A_s = \nabla^2 f_s(\theta)$ for some $s \in [N]$ chosen uniformly at random. In the GLM setting we are interested in here, we choose an index $s \in [N]$ uniformly at random and set $A_s = \dntwo x_s x_s^T + (\lambda / N) I_D$. Then, for $s = 1, \dots, S$, we follow \citet{agarwal:2016:lissa} to recursively define:
$$
	H\inv \approx \bar H_{s}\inv := I_D + (I - A_s) \bar H_{s-1}\inv.
$$
The final recommendation of \citet{agarwal:2016:lissa} is to repeat this process $M$ times and average the results. We thus have two hyperparameters to choose: $S$ and $M$. 

To test out the \citet{agarwal:2016:lissa} approximation against our approximation in \cref{alg:mainAlgorithm}, we generate Poisson regression datasets of increasing sizes $N$ and $D$. We generate approximately low-rank covariates $x_n$ by drawing $x_{nd} \sim N(0, 1)$ for $d = 1, \dots, 1{,}000$ and $x_{nd} \sim N(0, 0.01)$ for $d = 1{,}001, \dots, D$; for our dataset with $D = 40$, we follow the same procedure but with $R = 20$ instead. 
For each dataset, we compute $\IJ$, as well as our approximation $\tildeIJ$ from \cref{alg:mainAlgorithm}. 
We run \cref{alg:mainAlgorithm} for $K = 1, 100, 200, \dots, D$ and run the stochastic Neumann series approximation with all combinations of $M \in \set{2, 5}$ and $S \in \set{1, 5, 10, 15, \dots, 200}$.
We measure the accuracy of all approximations as percent error to exact CV ($x_n^T \thetan$).
We show in \cref{fig:neumannNoGood} that our approximation has far improved error in far less time. Notably, this phenomenon becomes more pronounced as the dimension gets higher; while spending more computation on the Neumann series approximation does noticeably decrease the error for the $N=80, D = 40$ case, we see that as soon as we step into even moderate dimensions ($D$ in the thousands), spending more computation on the Neumann approximation does not noticeably decrease the error. 
In fact, in the three lowest-dimensional experiments here, the dimension is so low that exactly computing $H\inv$ via a Cholesky decomposition is the fastest method.

We also notice that in the $N = 80, D=40$ experiment, $\tildeIJ$ is a better approximation of exact CV than is $\IJ$ for intermediate values of $K$ (i.e.\ some orange dots sit below the large green dot). We note that we have observed this behavior in a variety of synthetic and real-data experiments. We do not currently have further insight into this phenomenon and leave its investigation for future work.
\section{Previous ACV theory}
\label{app:previousTheory}

We briefly review pre-existing theoretical results on the accuracy of ACV. Theoretical results for the accuracy of $\IJ$ are given by \citet{giordano:2018:ourALOO,koh:2019:IJNSBounds,wilson:2020:modelSelectionALOO}. 
\citet{giordano:2018:ourALOO} give a $O(1/N^2)$ error bound for unregularized problems, which \citet[Proposition 2]{stephenson:2020:sparseALOO} extends to regularized prolems; however, in our GLM case here, both results require the covariates and parameter space to be bounded. 
\citet{koh:2019:IJNSBounds} give a similar bound, but require the Hessian to be Lipschitz, and their bounds rely on the inverse of the minimum singular value of $H$, making them unsuited for describing the low rank case of interest here. 
The bounds of \citet{wilson:2020:modelSelectionALOO} are close to our bounds in \cref{prop:appxLowRankAccuracy}. The difference to our work is that \citet{wilson:2020:modelSelectionALOO} consider generic (i.e.\ not just GLM) models, but also require a Lipschitz assumption on the Hessian. 
We specialize to GLMs, avoid the Lipschitz assumption by noting that it only need hold locally, and provide fully computable bounds.

Various theoretical guarantees also exist for the quality of $\NS$ from \cref{NSdef}. \citet{rad:2018:detailedALOO} show that the error $\| \NS - \thetan\|_2$ is $o(1/N)$ as $N \to \infty$ and give conditions under which the error is a much slower $O(1/\sqrt{N})$ as both $N, D \to \infty$ with $N/D$ converging to a constant. 
\citet{beirami:2017:firstALOO} show that the error is $O(1/N^2)$, but require fairly strict assumptions (namely, boundedness of the covariates and parameter space). 
\citet{koh:2019:IJNSBounds, wilson:2020:modelSelectionALOO} provide what seem to be the most interpretable bounds, but, as is the case for $\IJ$, both require a Lipschitz assumption on the Hessian and the results of \citet{koh:2019:IJNSBounds} depend on the lowest singular value of the Hessian.

\section{Proofs from \cref{sec:methodology,sec:accuracy}}
\label{app:proofs}

\subsection{Proving accuracy of $\NS$ and $\IJ$ under exact low-rank data (\cref{prop:exactLowRankAccuracy})}
\label{app:ELRAccuracy}

Here, we prove that, when the covariate matrix is exactly rank $R \ll D$, the accuracy of $\NS$ and $\IJ$ behaves exactly as in a dimension $R \ll D$ problem. Let $X = U\Sigma V$ be the singular value decomposition of $X$, where $\Sigma$ is a diagonal matrix with only $R$ non-zero entries; let $V_{:R} \in \R^{D \times R}$ be the right singular vectors of $X$ corresponding to these $R$ non-zero singular values. 
We define the restricted, $R$-dimensional problem with covariates $\tilde x_n := V_{:R}^T x_n$ as:
\begin{equation}
	\hat\phi := \argmin_{\phi \in \R^R} \frac{1}{N} \sum_{n=1}^N f(\tilde x_n^T \phi) + \frac{\lambda}{2} \n{\phi}_2^2.
\label{app-restrictedProblem}
\end{equation}
Let $\phin$ be the solution to the leave-one-out version of this problem and $\restrictedIJ$ and $\restrictedNS$ the application of \cref{IJdef,NSdef} to this problem. 
We then have the following proposition, which implies the statement of \cref{prop:exactLowRankAccuracy}.
\begin{prop}[Generalization of \cref{prop:exactLowRankAccuracy}]
	The following hold for all datapoints $n$:
	\begin{align*}
		& x_n^T \thetan = \tilde x_n^T \phin \\
		& x_n^T \IJ = \tilde x_n^T \restrictedIJ \\
		& x_n^T \NS = \tilde x_n^T \restrictedNS.
	\end{align*}	
	In particular, $\abs{ x_n^T \NS - x_n^T \thetan} = \abs{ \tilde x_n^T \restrictedNS - \tilde x_n^T \restrictedThetan}$ and $\abs{ x_n^T \IJ - x_n^T \thetan} = \abs{\tilde x_n^T \restrictedIJ - \tilde x_n^T \restrictedThetan}$, as claimed in \cref{prop:exactLowRankAccuracy}.
\end{prop}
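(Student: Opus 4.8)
The plan is to exploit the single structural fact that, since $X$ has rank exactly $R$, every covariate vector lies in the span of $V_{:R}$, so that $x_n = V_{:R}V_{:R}^T x_n = V_{:R}\tilde x_n$ with $\tilde x_n = V_{:R}^T x_n$. I would first dispatch the exact leave-one-out estimates. Write an arbitrary $\theta \in \R^D$ as $\theta = V_{:R}\phi + \theta^\perp$ with $\theta^\perp$ orthogonal to the columns of $V_{:R}$. Then each loss term satisfies $f(x_m^T\theta, y_m) = f(\tilde x_m^T\phi, y_m)$, since $x_m^T\theta^\perp = 0$, while $\|\theta\|_2^2 = \|\phi\|_2^2 + \|\theta^\perp\|_2^2$; hence the objective (full or leave-$n$-out) decouples, its $\theta^\perp$-component contributes only $(\lambda/2)\|\theta^\perp\|_2^2$, and so at the optimum $\theta^\perp = 0$ and the $\phi$-part solves exactly the restricted problem \cref{app-restrictedProblem} (resp. its leave-one-out version). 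This gives $\hat\theta = V_{:R}\hat\phi$ and $\thetan = V_{:R}\phin$, hence $x_n^T\thetan = \tilde x_n^T\phin$, which is the first claim.

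Next I would relate the quadratic forms. Complete $V_{:R}$ to an orthogonal matrix $V = [\,V_{:R}\ \ W\,] \in \R^{D\times D}$. Because $\tfrac1N\sum_m \dntwo x_m x_m^T = V_{:R}\bigl(\tfrac1N\sum_m \dntwo \tilde x_m\tilde x_m^T\bigr)V_{:R}^T$ and, since $x_m^T\hat\theta = \tilde x_m^T\hat\phi$, the scalar derivatives $\dntwo$ coincide with those of the restricted problem at $\hat\phi$, the full Hessian satisfies $V^T H V = \mathrm{blockdiag}(\tilde H,\ \lambda I_{D-R})$, where $\tilde H$ is the Hessian of the restricted problem at $\hat\phi$. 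Inverting and reading off the leading $R\times R$ block yields $V_{:R}^T H\inv V_{:R} = \tilde H\inv$, so $Q_n = x_n^T H\inv x_n = \tilde x_n^T \tilde H\inv \tilde x_n$, which is the quadratic form $\tilde Q_n$ of the restricted problem. Combining this with $x_n^T\hat\theta = \tilde x_n^T\hat\phi$ (which also forces the first and second scalar derivatives at $x_n^T\hat\theta$ to equal their restricted counterparts), I substitute into \cref{IJdef} and \cref{NSdef} to get $x_n^T\IJ = \tilde x_n^T\restrictedIJ$ and $x_n^T\NS = \tilde x_n^T\restrictedNS$. Subtracting $x_n^T\thetan = \tilde x_n^T\phin$ from each then gives the two claimed error equalities, and hence \cref{prop:exactLowRankAccuracy}.

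I do not expect a serious obstacle; the argument is essentially bookkeeping around the orthonormal change of basis $V$. The one point needing care is the decoupling step for the optimizers, which uses $\lambda > 0$ to force $\theta^\perp = 0$ (for $\lambda = 0$ the orthogonal component is unconstrained, and one would instead have to appeal to the rotational invariance of the regularizer / a min-norm convention as alluded to in the main text); I would state this assumption explicitly. A second minor check is that the Newton-step prediction \cref{NSdef}, after the Sherman--Morrison rewrite of \cref{app:NSIJDerivations}, is expressed entirely through the full Hessian $H$ and $Q_n$, so the block-diagonalization argument applies to it verbatim.
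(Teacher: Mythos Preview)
Your proposal is correct and follows essentially the same approach as the paper: both arguments hinge on $x_n = V_{:R}\tilde x_n$, the identification $\hat\theta = V_{:R}\hat\phi$ (and likewise for the leave-one-out fits), and the relation $V_{:R}^T H^{-1} V_{:R} = \tilde H^{-1}$. The only cosmetic difference is that you substitute $Q_n = \tilde Q_n$ directly into the scalar formulas \cref{NSdef,IJdef}, whereas the paper first establishes the parameter-level identity $\restrictedIJ = V_{:R}^T \IJ$ and then multiplies by $\tilde x_n$; your explicit block-diagonalization and the observation that $\lambda > 0$ forces $\theta^\perp = 0$ are a bit more transparent than the paper's terse first-order-condition check, but the substance is identical.
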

\begin{proof}
	First, note that if $\hat\phi$ is an optimum of \cref{app-restrictedProblem}, then $(1/N) \sum_n \dnone V_{:R}^T x_n + \lambda \hat\phi = 0$. 
	As $V_{:R} V_{:R}^T x_n = x_n$, we have that $\hat\theta = V_{:R} \hat\phi$ is optimal for the full, $D$-dimensional, problem. This implies that $\hat\phi = V_{:R}^T \hat\theta$, and thus $x_n^T \hat\theta = \tilde x_n^T \hat\phi$. 
	The same reasoning shows that $x_n^T \thetan = \tilde x_n^T \phin$.
	
	Now, notice that the Hessian of the restricted problem, $H_R$, is given by $H_R  = (1/N) \sum_n V_{:R}^T x_n x_n^T V_{:R} \dntwo + \lambda I_R \implies H_R\inv = V_{:R}^T H\inv V_{:R}$, where the $\dntwo$ are evaluated at $\tilde x_n^T \hat\phi = x_n^T \hat\theta$. 
	Also, the gradients of the restricted problem are given by $\nabla_\phi f(\tilde x_n^T \hat\phi, y_n) = \dnone V_{:R}^T x_n$. Thus the restricted IJ is:
	$$
		\restrictedIJ = \hat\phi + H_R\inv V_{:R}^T x_n \dnone = V_{:R}^T \left(\hat\theta + H\inv x_n \dnone \right) = V_{:R}^T \IJ.
	$$
	Thus, we have $\tilde x_n^T \restrictedIJ = x_n^T V_{:R} V_{:R}^T \IJ = (V_{:R} V_{:R}^T x_n)^T \IJ$. Using $V_{:R} V_{:R}^T x_n = x_n$, we have that $\tilde x_n^T \restrictedIJ = x_n^T \IJ$. The proof that $\tilde x_n^T \restrictedNS = x_n^T \NS$ is identical.
\end{proof}

\subsection{Proving accuracy of $\NS$ and $\IJ$ under ALR data (\cref{prop:appxLowRankAccuracy})}
\label{app:ALRAccuracy}

We will first need a few lemmas relating to how the exact solutions $\thetan$  and $\hat\theta$ vary as we leave datapoints out and move from exactly low-rank to ALR. We start by bounding $\| \hat\theta - \thetan\|_2$; this result and its proof are from \citet[Lemma 16]{wilson:2020:modelSelectionALOO} specialized to our GLM context.
\begin{lm} \label{lm:thetaHatMinusThetan}
	Assume that $\lambda > 0$. Then:
	\begin{equation} \n{\hat\theta - \thetan}_2 \leq \frac{1}{N \lambda} \abs{\dnone} \n{x_n}_2. \end{equation}
\end{lm}
\begin{proof}
	Let $\objn$ be the leave-one-out objective, $\objn(\theta) = (1/N) \sum_{m: \, m \neq n} f(x_m^T \theta, y_m) + (\lambda / 2) \| \theta\|_2^2$.
	As $\objn$ is strongly convex with parameter $\lambda$, we have:
	$$ \lambda \n{\hat\theta - \thetan}_2^2 \leq \iprod{\hat\theta - \thetan}{\nabla \objn (\hat\theta) - \nabla\objn(\thetan)}.$$
	Now, use the fact that $\nabla \objn(\thetan) = \nabla F(\hat\theta) = 0$ and then that $\objn - F = (1/N)f(x_n^T\theta)$ to get:
	\begin{align*}
		&= \iprod{\hat\theta - \thetan}{\nabla \objn(\hat\theta) - \nabla F(\hat\theta)} = \iprod{\hat\theta - \thetan}{\nabla f(x_n^T\hat\theta)} \\
		&\leq \n{\hat\theta - \thetan}_2 \abs{\dnone} \n{x_n}_2.
	\end{align*}
\end{proof}
We will need a bit more notation to discuss the ALR and exactly low-rank versions of the same problem. Suppose we have a $N \times D$ covariate matrix $X$ that is exactly low-rank (ELR) with rows $\xELR \in \R^D$. 
Then, suppose we form some approximately low-rank (ALR) covariate matrix by adding $\eps_n \in \R^D$ to all $x_n$ such that $X\eps_n = 0$ for all $\eps_n$. Let $\xALR$ be the rows of this ALR matrix. Let $\thetaELR$ be the fit with the ELR data and $\thetaALR$ the fit with the ALR data. Finally, define the scalar derivatives:
\begin{align*}
	&\dmoneELR{\theta} := \frac{df(z, y_n)}{dz}\at{z = \iprod{\xELR}{\theta}} \\
	&\dmoneALR{\theta} := \frac{df(z, y_n)}{dz}\at{z = \iprod{\xALR}{\theta}}
\end{align*}
We can now give an upper bound on the difference between the ELR and ALR fits $\| \thetaELR - \thetaALR\|_2$.
Our bound will imply that the $\thetaALR$ is a continuous function of the $\eps_n$, which in turn are continuous functions of the singular values of the ALR covariate matrix.
\begin{lm} \label{lm:lowRankContinuous}
	Assume $\lambda > 0$. We have:
	$$ 
		\n{\thetaELR - \thetaALR}_2 \leq \frac{1}{N \lambda} \n{\sum_{n=1}^N \dmoneELR{\thetaELR} \eps_n}_2
	$$
	In particular, $\thetaALR$ is a continuous function of the $\eps_n$ around $\eps_1, \dots, \eps_N = 0$.
\end{lm}
\begin{proof}
	Denote the ALR objective by $\objALR(\theta) = (1/N) \sum_n f(\xALR^T \theta) + \lambda \| \theta \|_2^2$. Then, via a Taylor expansion of its gradient around $\thetaALR$:
	$$
		\nabla_\theta \objALR(\thetaELR) = \nabla_\theta \objALR(\thetaALR) + \nabla^2_\theta \objALR(\tilde\theta) (\thetaELR - \thetaALR),
	$$
	where $\tilde\theta \in \R^D$ satisfies $\tilde\theta_d = (1-s_d) \theta_{ALR, d} + s_d \theta_{ELR, d}$ for some $s_d \in [0,1]$ for each $d = 1, \dots, D$. Via strong convexity and $\nabla_\theta \objALR(\thetaALR) = 0$, we have:
	$$
		\n{\thetaELR - \thetaALR}_2 \leq \frac{1}{\lambda} \n{\nabla_\theta \objALR(\thetaELR)}_2.
	$$
	Now, note that the gradient on the right hand side of this equation is equal to 
	\begin{equation}
	\nabla_\theta \objALR(\thetaELR) = \frac{1}{N} \sum_{n=1}^N \dmoneALR{\thetaELR} \xELR + \frac{1}{N} \sum_{n=1}^N \dmoneALR{\thetaELR} \eps_n + \lambda \thetaELR . \label{ALRgrad}
	\end{equation}
	By the optimality of $\thetaELR$ for the exactly low-rank problem, we must have that $\iprod{\eps_n}{\thetaELR} = 0$ for all $n$; in particular, this implies that $\iprod{\xELR}{\thetaELR} = \iprod{\xALR}{\thetaELR}$, which in turn implies $\dmoneALR{\thetaELR} = \dmoneELR{\thetaELR}$ for all $n$. Also by the optimality of $\thetaELR$, we have $(1/N) \sum_n \dmoneELR{\thetaELR} x_n + \lambda \thetaELR = 0$. Thus we have that \cref{ALRgrad} reads:
	$$	
		\nabla_\theta \objALR(\thetaELR) = \frac{1}{N} \sum_{n=1}^N \dmoneELR{\thetaELR} \eps_n,
	$$
	which completes the proof.
\end{proof}
We now restate and prove \cref{prop:appxLowRankAccuracy}.
\begin{replemma}{prop:appxLowRankAccuracy} \label{prop:app-appxLowRankAccuracy}
	Assume that $\lambda > 0$ and recall the definition of $L_n$ from \cref{LnDef}. 
	Then, for all $n$:
	\begin{align}
		& \abs{x_n^T \NS - x_n^T \thetan} \leq \frac{L_n}{N^2 \lambda^3} \abs{\dnone}^2 \n{x_n}_2^3 \label{app-appxBoundNS} \\
		& \abs{x_n^T \IJ - x_n^T \thetan} \leq \frac{L_n}{N^2 \lambda^3} \abs{\dnone}^2 \n{x_n}_2^3 + \frac{1}{N^2 \lambda^2} \abs{\dnone} \dntwo \n{x_n}_2^4. \label{app-appxBoundIJ}
	\end{align}
	Furthermore, these bounds continuously decay as the data move from exactly to approximately low rank in that they are continuous in the singular values of $X$.
\end{replemma}
\begin{proof}
	The proof of \cref{app-appxBoundNS,app-appxBoundIJ} strongly resembles the proof of \citet[Lemma 17]{wilson:2020:modelSelectionALOO} specialized to our current context. We first prove \cref{app-appxBoundNS}. We begin by applying the Cauchy-Schwarz inequality to get:
	$$
		\abs{x_n^T \NS - x_n^T \thetan} \leq \n{x_n}_2 \n{\NS - \thetan}_2. 
	$$
	The remainder of our proof focuses on bounding $\| \NS - \thetan \|_2$. Let $\widetilde\objn$ be the second order Taylor expansion of $\objn$ around $\hat\theta$; then $\NS$ is the minimizer of $\widetilde\objn$. 
	By the strong convexity of $\widetilde\objn$:
\begin{align}
	\lambda \n{\thetan - \NS}_2^2 &\leq \iprod{\NS - \thetan}{\nabla \widetilde\objn(\NS) - \nabla\widetilde\objn(\thetan)} \\
	&= \iprod{\NS - \thetan}{\nabla\objn(\thetan) - \nabla\widetilde\objn(\thetan)} \label{thetanNSConvexityBound}
\end{align}
Now the goal is to bound this quantity as the remainder in a Taylor expansion. To this end, define $r(\theta) := \iprod{\thetan - \NS}{\nabla\objn(\theta)}$. 
To apply Taylor's theorem with integral remainder, define $g(t) := r((1-t)\hat\theta + t\thetan)$ for $t \in [0,1]$. Then, by a zeroth order Taylor expansion:
$$ 
	g(1) = g(0) + g'(0) + \int_0^1 \left(g'(s) - g'(0)\right) d s.
$$
Putting in the values of $g$ and its derivatives:
\begin{align*}
	\iprod{\thetan - \NS}{\nabla\objn(\thetan)} &= \iprod{\thetan - \NS}{\nabla\objn(\hat\theta)} + \iprod{\thetan - \NS}{\nabla^2 \objn(\hat\theta) \big( \thetan - \hat\theta \big)} \\
	&+ \int_0^1 \iprod{\thetan - \NS}{ \left(\nabla^2 \objn((1-s)\hat\theta + s\thetan) - \nabla^2 \objn(\hat\theta)\right) \big( \thetan - \hat\theta \big)} d s
\end{align*}
Now, subtracting the first two terms on the right hand side from the left, we get can identify the left with \cref{thetanNSConvexityBound}. Thus, \cref{thetanNSConvexityBound} is equal to:
$$ = \int_0^1 \iprod{\thetan - \NS}{ \left(\nabla^2 \objn((1-s)\hat\theta + s\thetan) - \nabla^2 \objn(\hat\theta)\right) \big( \thetan - \hat\theta \big)} d s.$$
We can upper bound this by taking an absolute value, then applying the triangle inequality and Cauchy-Schwarz to get
\begin{equation}
	 \leq \n{\thetan - \NS}_2 \n{\thetan - \hat\theta} \int_0^1 \n{\left(\nabla^2 \objn((1-s)\hat\theta + s\thetan) - \nabla^2 \objn(\hat\theta)\right)}_{op} d s. \label{taylorRemainderBound}
\end{equation}
Using the fact that, on the line segment $(1-s) \hat\theta + s\thetan$, the $\dntwo$ are lipschitz with constant $C_n$:
$$
	C_n := \max_{s = in [0,1]} \left\lvert \dnthree\left( (1-s) \hat\theta + s\thetan \right) \right\rvert,
$$
 we can upper bound the integrand by:
\begin{align*}
	&\n{\left(\nabla^2 \objn((1-s)\hat\theta + s\thetan) - \nabla^2 \objn(\hat\theta)\right)}_{op}\\
	 &= \frac{1}{N} \n{\sum_{m\neq n} \left(\dntwo\big( (1-s)\hat\theta + s\thetan \big) - \dntwo\big(\hat\theta\big) \right) x_m x_m^T}_{op} \\
	&\leq \frac{C_n \n{\thetan - \hat\theta}_2}{N} \sum_{m\neq n} \n{x_m}_2^2.
\end{align*}
Putting this into \cref{taylorRemainderBound} and using \cref{lm:thetaHatMinusThetan} gives the result \cref{app-appxBoundNS} with $L_n = C_n / N \sum_{m\neq n} \n{x_m}_2^2$.

Now \cref{app-appxBoundIJ} follows from the triangle inequality $\| \IJ - \thetan\|_2 \leq \| \NS - \thetan \|_2 + \| \IJ - NS\|_2$. The bound on $\| \IJ - \NS\|_2$ follows from \citet[Lemma 20]{wilson:2020:modelSelectionALOO}.

Finally, the continuity of the bounds in \cref{app-appxBoundNS,app-appxBoundIJ} follows from \cref{lm:lowRankContinuous}. In particular, the $\dnone, \dntwo$, and $\dnthree$ in both bounds are evaluated at $\thetaALR$, which is shown to be a continuous function of the $\eps_n$ in \cref{lm:lowRankContinuous}. The $\eps_n$ are, in turn, continuous functions of the lower singular values of the covariate matrix. 
\end{proof}

\subsection{Proof of \cref{thm:mainThm}}
\label{app:mainTheoremProof}
\begin{proof}

We first note that the runtime claim is immediate, as \cref{alg:customizedTropp} runs in $O(NDK + K^3)$ time. That the bounds are computable in $O(DK)$ time for each $n$ follows as all derivatives $\dnone$ and $\dntwo$ need only the inner product of $x_n$ and $\hat\theta$, which takes $O(D)$ time. 
Each norm $\|x_n \|_2$ is computable in $O(D)$. For models for which the optimization problem in \cref{prop:MnBound} can be quickly solved -- such as Poisson or logistic regression -- we need only to compute a bound on $\| \thetan - \hat\theta\|_2$, which we can do in $O(D)$ via \cref{lm:thetaHatMinusThetan}. 
The only remaining quantity to compute is the $\eta_n$, which, by \cref{prop:basicErrorBound}, is computed via a projection onto the orthogonal complement of a $K$-dimensional subspace. We can compute this projection in $O(DK)$. Thus our overall runtime is $O(DK)$ per datapoint.

To prove \cref{algIJBnd}, we use the triangle inequality $\abs{x_n^T \tildeIJ - x_n^T \thetan} \leq \abs{x_n^T \IJ - x_n^T \thetan} + \abs{x_n^T \IJ - x_n^T \tildeIJ}$. We upper bound the first term by using \cref{prop:appxLowRankAccuracy}. For the latter, we note that $\abs{x_n^T \IJ - x_n^T \tildeIJ} = \abs{\dnone} \abs{Q_n - \tilde Q_n}$, which we can bound via the $\eta_n$ of \cref{prop:basicErrorBound}. The proof for $\NS$ is similar.
\end{proof}

\subsection{Proof of \cref{cor:lowRank}}
\label{app:corLowRank}

\begin{proof}
	Notice that $\Omega$ from \cref{alg:customizedTropp} captures a rank-$K$ subspace of the column span of $X$. The error bound $\eta_n$ is the norm of $x_n$ projected outside of this subspace divided by $\lambda$. Now, assume that we have $K \geq R$. Then, as the singular values $\sigma_d$ for $d = R+1, \dots, D$ go to zero, the norm of any $x_n$ outside this subspace must also go to zero. Thus $\eta_n$ goes to zero. As $E_n$ is a continuous function of $\eta_n$, we also have $E_n \to 0$.
\end{proof}

\section{Proofs and discussion from \cref{sec:computation}}
\label{app:computation}

\subsection{Proofs}
\label{app:computationProofs}

For convenience, we first restate each claimed result from the main text before giving its proof.
\begin{repproposition}{prop:basicErrorBound} \label{prop:app-basicErrorBound}
	Let $\lambda > 0$ and suppose there is some subspace $\mathcal{B}$ on which $H$ and $\tildeH$ exactly agree. 
	Then $H\inv$ and $\tildeH\inv$ agree exactly on the subspace $\agree := H \mathcal{B}$, and for all $n = 1,\dots, N$:
	\begin{equation} 
		\abs{x_n^T \tildeH\inv x_n - Q_n} \leq \frac{\n{P^\perp_\agree x_n}_2^2}{\lambda},
	\end{equation}
	where $P^\perp_\agree$ denotes projection onto the orthogonal complement of $\mathcal{A}$.
\end{repproposition}
\begin{proof}
	First, if $H$ and $\tildeH$ agree on $\fwdAgree$, then for $\agree = H \fwdAgree = \tildeH \fwdAgree$, we have $H\inv \agree = \fwdAgree = \tildeH\inv \agree$, as claimed. Then:
	\begin{align*}
		\abs{Q_n - x_n^T \tildeH\inv x_n} &= \abs{x_n^T H\inv x_n - x_n^T \tildeH\inv x_n} \\
		&\leq \abs{(P_\agree^\perp x_n) (H\inv - \tildeH\inv)(P_\agree^\perp x_n)} \\
		&\leq \n{P_\agree^\perp x_n}_2^2 \n{H\inv - \tildeH\inv}_{op, \agree^\perp},
	\end{align*}
	where $\n{\cdot}_{op, \agree^\perp}$ is the operator norm of a matrix restricted to the subspace $\agree^\perp$. On this subspace, the action of $\tildeH\inv$ is $1/\lambda$ times the identity, whereas all eigenvalues of $H\inv$ are all between 0 and $1/\lambda$. Thus:
	\begin{align*}
		\n{\tildeH\inv - H\inv }_{op,\agree^\perp} &= \max_{v \in \agree^\perp, \; \n{v}_2 = 1} \left[ v^T \tildeH \inv v - v^T H\inv v \right] \\
		&= \max_{v \in \agree^\perp, \; \n{v}_2 = 1} \left[ \frac{1}{\lambda} - v^T H\inv v \right] \leq \frac{1}{\lambda}.
	\end{align*}
\end{proof}
We next restate and proof \cref{prop:qnBound}.
\begin{repproposition}{prop:qnBound} \label{prop:app-qnBound}
	The $Q_n = x_n^T H\inv x_n$ satisfy
	$
		0 \leq Q_n \leq \|x_n\|_2^2 / (\lambda + \dntwo \| x_n \|_2^2).
	$
	Furthermore, letting $\tildeqn := \mathrm{min} \{ x_n^T \tildeH\inv x_n, \, \n{x_n}_2^2 / (\lambda + \dntwo \n{x_n}_2^2) \}$, we have the error bound
	\begin{equation}
		\abs{\tildeqn - Q_n} \leq \min\set{\frac{\n{P^\perp_\agree x_n}_2^2}{\lambda}, \frac{\|x_n\|_2^2}{\lambda + \dntwo \|x_n\|_2^2}}.
	\end{equation}
\end{repproposition}
\begin{proof}
	Let $b_n := \sqrt{\dntwo} x_n$. Let $\set{v_d}_{d=1}^D$ be the eigenvectors of $H$ with eigenvalues $\set{\gamma_d + \lambda}_{d=1}^D$ with $\gamma_1 \geq \gamma_2 \geq \dots \geq \gamma_D$. The quantity $b_n^T H\inv b_n$ is maximized if $b_n$ is parallel to $v_D$; in this case, $b_n^T H\inv b_n = \n{b_n}_2^2 / (\gamma_D + \lambda)$. However, if $b_n$ is parallel to $v_D$, it must be that $\gamma_D \geq \n{b_n}_2^2$. Thus, $b_n^T H\inv b_n \leq \n{b_n}_2^2 / (\n{b_n}_2^2 + \lambda)$. Dividing by $\dntwo$ gives that $Q_n = x_n^T H\inv x_n$ satisfies:
	$$ 0 \leq Q_n \leq \frac{\n{x_n}_2^2}{\lambda + \dntwo \n{x_n}_2^2}.$$
	If we estimate $Q_n$ by the minimum of this upper bound and $x_n^T \tildeH\inv x_n$, the error bound from \cref{prop:basicErrorBound} implies the error bound claimed here.
\end{proof}

\subsection{Relation of \cref{alg:customizedTropp} to techniques from randomized linear algebra}
\label{app:tropp}

As noted, our \cref{alg:customizedTropp} bears a resemblance to techniques from the randomized numerical linear algebra literature. Indeed, our inspiration for \cref{alg:customizedTropp} was the work of \citet{tropp:2017:psdRandomizedApproximation}. 
\citet{tropp:2017:psdRandomizedApproximation} propose a method to find a randomized top-$K$ eigendecomposition of a positive-semidefinite matrix $B$. Their method follows the basic steps of (1) produce a random orthonormal matrix $\Omega \in \R^{D \times (S+K)}$, where $S \geq 0$ is an \emph{oversampling} parameter to ensure the stability of the estimated eigendecomposition, (2) compute the Nystr\"{o}m approximation of $B_{nys} \approx B$ using $\Omega$, and (3) compute the eigendecomposition of $B_{nys}$ and throw away the lowest $S$ eigenvalues. 

Our \cref{alg:customizedTropp} can be seen as using this method of \citet{tropp:2017:psdRandomizedApproximation} to obtain a rank-$K$ decomposition of the matrix $B = (1/N) \sum_n \dntwo x_n x_n^T$ with specific choices of $S$ and $\Omega$. First, we notice that $S = 0$ (i.e., no oversampling) is optimal in our application -- the error bound of \cref{prop:basicErrorBound} decreases as the size of the subspace $\agree$ increases. As $S > 0$ only decreases the size of this subspace, we see that our specific application is only hurt by oversampling. Next, while \citet{tropp:2017:psdRandomizedApproximation} recommend completely random matrices $\Omega$ for generic applications (e.g., the entries of $\Omega$ are i.i.d.\ $\normaldist(0,1)$), we note that the results of \cref{prop:optimalAgree} suggest that we can improve upon this choice. With the optimal choice of $S = 0$, we note that $\agree = H \Omega$. In this case, \cref{prop:optimalAgree} implies it is optimal to set $\Omega = H\inv V_{:K}$, where $V_{:K}$ are the top-$K$ right singular vectors of $X$. \cref{alg:customizedTropp} provides an approximation to this optimal choice.

We illustrate the various possible choices of $\Omega$, including i.i.d.\ $\normaldist (0,1)$, in \cref{appxHExperiments}. We generate a synthetic Poisson regression problem with covariates $x_{nd} \overset{i.i.d.}{\sim} \normaldist (0,1)$ and $y_n \sim \mathrm{Poisson}(e^{x_n^T \theta^*})$, where $\theta^* \in \R^D$ is a true parameter with i.i.d.\ $\normaldist (0,1)$ entries. We generate a dataset of size $N = 200$. The covariates have dimension $D = 150$ but are of rank 50. We compute $\widetilde IJ$ for various settings of $K$ and $\Omega$, as shown in \cref{appxHExperiments}. As suggested by the above discussion, we use no oversampling (i.e., $S = 0$). On the left, we see that using a diagonal approximation to $H\inv$ and a single subspace iteration gives a good approximation to the optimal setting of $\Omega$. On the right, we see the improvement made by use of the upper bound on $x_n^T H\inv x_n$ from \cref{prop:qnBound}.
\begin{figure}
\centering
\begin{tabular}{cc}
	\includegraphics[scale=0.45]{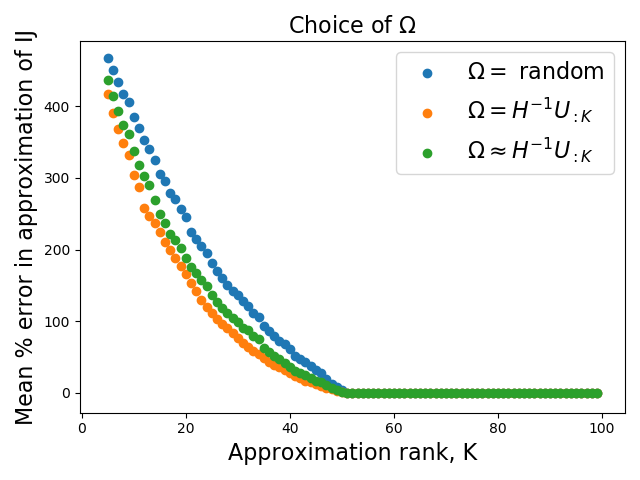} &
	\includegraphics[scale=0.45]{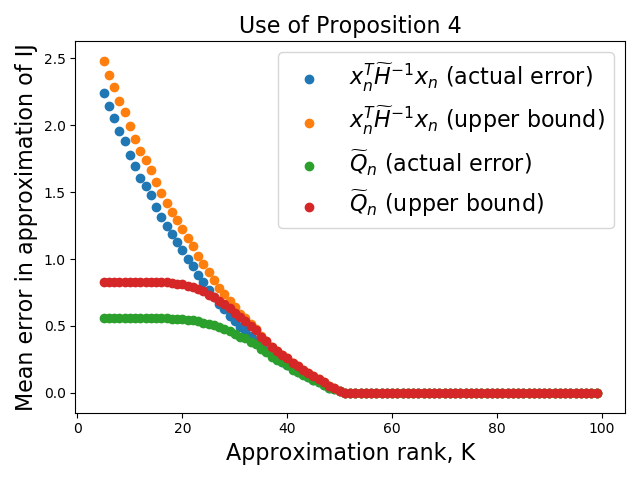}
\end{tabular}
\caption{Quality of approximation of $\IJ$ on a synthetic Poisson regression problem using the methods from \cref{sec:computation}. (\emph{Left}): We show three options for the choice of the matrix $\Omega$. Blue shows the choice of $\Omega$ having orthonormal columns selected uniformly at random, orange the optimal choice of $\Omega$ from \cref{prop:optimalAgree}, and green our approximation to this optimal choice. Percent error $\abs{\IJ - \widetilde\IJ} / \abs{\IJ}$ is reported to give a sense of scale. (\emph{Right}): Importance of \cref{prop:qnBound} for approximating $Q_n$. We show two approximations along with our upper bounds on their error: (1) $Q_n \approx x_n^T \tildeH\inv x_n$ and (2) our recommended $Q_n \approx \tildeqn$ from \cref{prop:qnBound}. We report absolute error $\abs{\IJ - \widetilde\IJ}$ so that both actual and estimated error can be plotted.}
\label{appxHExperiments}
\end{figure}

\subsection{Implementation of \cref{alg:customizedTropp}}
\label{app:computationImplementation}

As noted by \citet{tropp:2017:psdRandomizedApproximation}, finding the decomposition of $B$ in \cref{alg:customizedTropp} as-written can result in numerical issues. Instead, \citet{tropp:2017:psdRandomizedApproximation} present a numerically stable version which we use in our experiments. For completeness, we state this implementation here, which relies on computing the Nystr\"{o}m approximation of the shifted matrix $B_\nu = B + \nu I_D$, for some small $\nu > 0$:
\begin{enumerate}
	\item Construct the shifted matrix sketch $G_\nu := B\Omega + \nu\Omega$.
	\item Form $C = \Omega^T G_\nu$.
	\item Compute the Cholesky decomposition $C = \Gamma \Gamma^T$.
	\item Compute $E = G_\nu \Gamma\inv$.
	\item Compute the SVD $E = U \Sigma V^T$.
	\item Return $U$ and $\Sigma^2 - \nu I$ as the approximate eigenvectors and eigenvalues of $B$.
\end{enumerate}

\section{Error bound experiments}
\label{app:errorBoundExps}

\begin{figure}
\label{app-boundExperiments}
\centering
\begin{tabular}{cc}
	\includegraphics[scale=0.45]{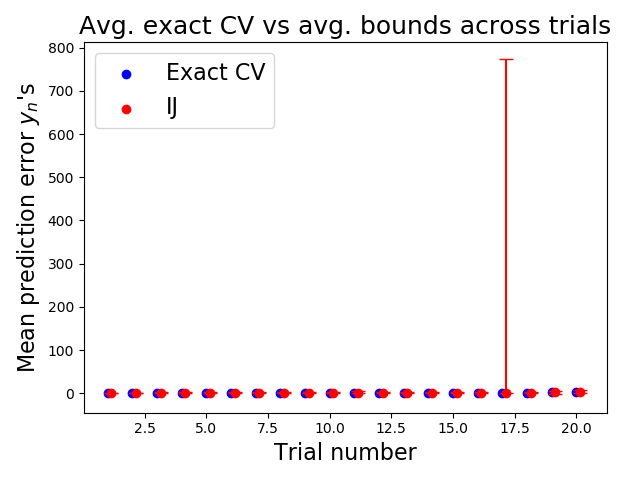} &
	\includegraphics[scale=0.45]{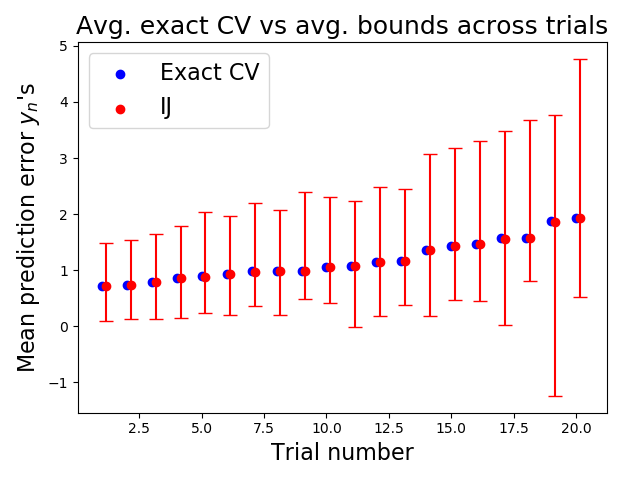}
\end{tabular}
\caption{Experiments from \cref{app:errorBoundExps}. (\emph{Left}): Error bounds can be vacuously large; for trial number 16, our bound exceeds exact CV by two orders of magnitude. (\emph{Right}): By computing our bound for all $n$ and re-running exact CV for the two largest values, we obtain estimates that are much closer to exact CV.}
\end{figure}

Here, we provide more details on our investigation of the error bounds of \cref{thm:mainThm} from \cref{fig:errorBounds}. In \cref{sec:experiments}, we showed that, over five randomly generated synthetic datasets, our error bound on $x_n^T \tildeIJ$ implies upper bounds on out-of-sample error that are reasonably tight. However, we noted that these bounds can occasionally be vacuously loose. On the left of \cref{app-boundExperiments}, we show this is the case by repeating the experiment in \cref{fig:errorBounds} for an additional fifteen trials. While most trials have similar behavior to the first five, trial 16 finds an upper bound of the out-of-sample error that is too loose by two orders of magnitude. However, we note that this behavior is mostly due to two offending points $n$. Indeed, on the right of \cref{app-boundExperiments}, we show the same results having replaced the two largest bound values with those from exact CV.
\section{Real data experiments}
\label{app:realDataExperiments}

Here we provide more details about the three real datasets used in \cref{sec:experiments}.
\begin{enumerate}
	\item The \texttt{p53} dataset is from \citet{p53_1,p53_2,p53_3}. The full dataset contains $D = 5{,}408$ features describing attributes of mutated p53 proteins. The task is to classify each protein as either ``transcriptionally competent'' or inactive. To keep the dimension high relative to the number of observations $N$, we subsampled $N = 8{,}000$ datapoints uniformly at random for our experiments here. We fix $K = 500$ to compute $\tildeqn$ for both $x_n^T \tildeIJ$ and $x_n^T \tildeNS$.
	\item The \texttt{rcv1} dataset is from \cite{rcv1}. The full dataset is of size $N = 20{,}242$ and $D = 47{,}236$. Each datapoint corresponds to a Reuters news article given one of four labels according to its subject: ``Corporate/Industrial,'' ``Economics,'' ``Government/Social,'' and ``Markets.'' We use a pre-processed binarized version from \url{https://www.csie.ntu.edu.tw/~cjlin/libsvmtools/datasets/binary.html}, which combines the first two categories into a ``positive'' label and the latter two into a ``negative'' label. We found running CV on the full dataset to be too computationally intensive, and instead formed a smaller dataset of size $N = D = 20{,}000$. The data matrix is highly sparse, so we chose our $20{,}000$ dimensions by selecting the most common (i.e., least sparse) features. We then chose $N = 20{,}000$ datapoints by subsampling uniformly at random. We fix $K = 1{,}000$ in this experiment.
	
	\item The \texttt{blog} dataset is from \citet{buza:2014:blogFeedback}. The base dataset contains $D = 280$ features about $N = 52{,}397$ blogs. Each feature represents a statistic about web traffic to the given blog over a 72 hour period. The task is to predict the number of unique visitors to the blog in the subsequent 24 hour period. We first generate a larger dataset by considering all possible pairwise features $x_{nd_1} x_{nd_2}$ for $d_1, d_2 \in \{1, \dots, D\}$. The resulting problem has too high $N$ and $D$ to run exact CV on in a reasonable amount of time, so we again subsample to $N = 20{,}000$ and $D = 20{,}280$. We again choose the $20{,}000$ least sparse parwise features and then add in the original 280 features. Finally, we choose our $20{,}000$ datapoints uniformly at random.
\end{enumerate}
\end{document}